\theoremstyle{plain}
\newtheorem{theorem}{Theorem}[section]
\theoremstyle{definition}
\newtheorem{definition}[theorem]{Definition}
\theoremstyle{remark}
\renewcommand{\vec}[1]{\boldsymbol{#1}}
\definecolor{cmarkcolor}{RGB}{0, 128, 0}
\definecolor{acmarkcolor}{RGB}{52, 152, 219}
\definecolor{xmarkcolor}{RGB}{200, 0, 0}
\newcommand{\cmark}{{\color{cmarkcolor} \ding{51}}}   
\newcommand{\xmark}{{\color{xmarkcolor}\ding{55}}}      
\icmltitlerunning{Is Epistemic Uncertainty Faithfully Represented by Evidential Deep Learning Methods?}
\begin{document}

\twocolumn[
\icmltitle{Is Epistemic Uncertainty Faithfully Represented \\ by Evidential Deep Learning Methods?}



\icmlsetsymbol{equal}{*}

\begin{icmlauthorlist}
\icmlauthor{Mira Jürgens}{yyy}
\icmlauthor{Nis Meinert}{comp}
\icmlauthor{Viktor Bengs}{sch}
\icmlauthor{Eyke Hüllermeier}{sch}
\icmlauthor{Willem Waegeman}{yyy}
\end{icmlauthorlist}

\icmlaffiliation{yyy}{Department of Data Analysis and Mathematical Modeling, Ghent University, Belgium}
\icmlaffiliation{comp}{Institute of Communications and Navigation, German Aerospace Center (DLR), Neustrelitz, Germany}
\icmlaffiliation{sch}{Department of Informatics, University of Munich (LMU), Germany}

\icmlcorrespondingauthor{Mira Jürgens}{mira.juergens@ugent.be}

\icmlkeywords{Machine Learning, ICML}

\vskip 0.3in
]



\printAffiliationsAndNotice{} 

\begin{abstract}
    Trustworthy ML systems should not only return accurate predictions, but also a reliable representation of their uncertainty. Bayesian methods are commonly used to quantify both aleatoric and epistemic uncertainty, but alternative approaches, such as evidential deep learning methods, have become popular in recent years. 
    The latter group of methods in essence extends empirical risk minimization (ERM) for predicting second-order probability distributions over outcomes, from which measures of epistemic (and aleatoric) uncertainty can be extracted.
    This paper presents novel theoretical insights of evidential deep learning, highlighting the difficulties in optimizing second-order loss functions and interpreting the resulting epistemic uncertainty measures.
    With a systematic setup that covers a wide range of approaches for classification, regression and counts, it provides novel insights into issues of identifiability and convergence in second-order loss minimization, and the relative (rather than absolute) nature of epistemic uncertainty measures.
\end{abstract}

\section{Introduction}
Machine learning systems are capable of making predictions based on patterns found in data.
However, these patterns may be incomplete, noisy, or biased, which can lead to errors and inaccuracies in the system's predictions.
To address this challenge, uncertainty quantification has emerged as a technique to measure the reliability of a model's predictions and the confidence that can be placed in its results.
In supervised learning, uncertainty is often classified into two categories: aleatoric and epistemic~\citep{kend_wu17,mpub440}.
Since outcomes cannot be predicted deterministically, we assume for classical supervised learning methods a conditional probability $p(y \,|\, \vec{x})$ on $\mathcal{Y}$ as the ground truth and train a probabilistic predictor to produce estimates $\hat{p}(y \,|\, \vec{x}) $.
These estimates capture the aleatoric uncertainty about the actual outcome $y \in \mathcal{Y}$, which refers to the inherent randomness that the learner cannot eliminate.
However, this approach does not allow the learner to express its epistemic uncertainty, which is the lack of knowledge about how accurately $\hat{p}$ approximates $p$. 

 To capture both types of uncertainty, a popular choice is to consider second-order distributions, which map a query instance $\vec{x}$ to a probability distribution over all possible candidate distributions $p$. This concept forms one of the core concepts behind Bayesian methods, such as Bayesian neural networks, together with the incorporation of domain knowledge via a prior distribution~\citep{depe_du18}. However, exact computation of the posterior distribution can be computationally intractable, particularly when using deep learning methods.
As a result, approximate techniques such as variational inference and sampling-based methods have been used extensively~\citep{Graves2011}.

\begin{table*}[t]

\caption{Overview of most important methods analyzed in this paper. See text for a further explanation of the column headers (NLL = negative log-likelihood, MAE = mean absolute error, OOD = out-of-distribution, KL = Kullbeck-Leibler divergence).}
\label{tab:methods}
\vskip 0.15in
\begin{scriptsize}    
\begin{sc}
\begin{tabular}{lccccc}
\toprule
Method & Optimization & First-order & Setting & Regularizer & Quantification of \\
&Problem& Loss $L_1$&& $R(\vec{\Phi})$& Epistemic Uncertainty \\
\midrule
\citet{sens_ed18} & Outer & Brier & Classification & Entropy & Pseudo-counts\\
\citet{MalininG18} & Outer & KL & Classification & OOD KL & Mutual information\\
\citet{MalininG19} & Outer  & Reverse KL & Classification & OOD KL & Mutual information \\
\citet{char_ue20} & Outer & NLL & Classification & Entropy & Pseudo-counts \\
\citet{HuseljicSHK20} & Inner & NLL & Classification & OOD KL & Pseudo-counts \\
\citet{MalininUnpub} & Outer & NLL & Regression & Entropy & Mutual information\\
\citet{amini2020deep} & Inner &NLL& Regression & Specific & Variance \\
\citet{Ma2021TrustworthyMR} &Inner & NLL & Regression & Sepcific & Variance \\
\citet{tsiligkaridis2021information} & Outer & $L_p$-norm & Classification & R\'enyi divergence & Mutual information \\
\citet{KopetzkiCZGG21} & Outer & Multiple & Classification & Entropy & Pseudo-counts \\
\citet{Bao2021} & Outer & Brier & Classification & Specific & Pseudo-counts \\
\citet{Liu2021} & Inner & NLL+MAE & Regression & Pseudo-counts & Variance \\
\citet{Charpentier2022NaturalPN} & Outer & NLL & Class./Reg./Counts & Entropy & Pseudo-counts \\
\citet{Oh_Shin_2022} & Inner & Squared loss & Regression & Gradient-based & Variance \\
\citet{Shankar2023AAAI} & Inner & NLL & Regression & Distance to OOD  & Variance\\
\citet{Kotelevskii2023} & Outer & Specific & Classification & Entropy & Entropy \\
\bottomrule
\end{tabular}
\end{sc}
\end{scriptsize}
\vskip -0.1in
\end{table*}

\begin{table*}[ht!]
	\caption{%
		Overview of the key points covered in previous work that critically scrutinizes evidential deep learning (EDL). Detailed differences are discussed in Section~\ref{sec:theory}. 
	}\label{tbl:hil-survey-overview}
\label{tab:overview_contribution}
\vskip 0.15in
\begin{scriptsize}     
\begin{sc}
\begin{tabular}{lccccc}
		\toprule
		\textbf{Reference} & \textbf{Critique} & \textbf{Classification} & \textbf{Regression} & \textbf{Count} & \textbf{Experiments} \\
		\midrule
		\citet{bengs2022pitfalls} & Convergence issues of EDL & \cmark & \xmark & \xmark & \cmark \\
		\citet{Meinert2023} &  Convergence issues of EDL & \xmark & \cmark & \xmark & \cmark \\[0.3ex]
		\citet{Bengs2023} & Lack of properness of second-order loss functions & \cmark & \cmark & \xmark & \xmark \\[0.3ex]
            \textbf{Our work} & No coherence with a suitable reference distribution  & 
        \cmark & \cmark & \cmark & \cmark \\
		\bottomrule
\end{tabular}
\end{sc}
\end{scriptsize}
\vskip -0.1in
\end{table*}

An alternative approach, known as evidential deep learning, that adopts a direct uncertainty quantification by means of second-order risk minimization, is currently gaining a lot of traction (see Table~\ref{tab:methods} for an overview of the most important methods). 
This approach takes a frequentist statistics perspective and introduces specific loss functions to estimate a second-order distribution over $\hat{p}(y \,|\, \vec{x}) $, without the need for specifying a prior, unlike Bayesian methods. In the last years, evidential deep learning has been widely used in applications of supervised learning, and many authors claim that it is state-of-the-art in downstream ML tasks that benefit from epistemic uncertainty quantification, such as out-of-distribution detection~\citep{Yang2022}, robustness to adversarial attacks~\citep{KopetzkiCZGG21} or active learning~\citep{Park2023}.
However, in all these tasks, one only needs \emph{relative} epistemic uncertainty measures, e.g., when an example attains a higher epistemic uncertainty than a second example, it is more likely to be out-of-distribution. 
For specific optimization problems, some authors have expressed concerns in different ways as to whether such measures also have a quantitative interpretation. Nonetheless, previous papers have not covered all possible aspects, see Table \ref{tab:overview_contribution} for a general overview. 

This paper presents novel theoretical results that put previous papers in a unifying perspective by analyzing a broad class of exponential family models that cover classification, regression and count data. More specifically, we intend to answer the following two questions: (a) What properties are needed for second-order risk minimization methods to represent epistemic uncertainty in a quantitatively faithful manner? and (b) Do such methods represent epistemic uncertainty in a faithful manner? For the first research question, we will introduce in Section~\ref{subsec:reference_distribution} the notion of a \emph{reference distribution}, and we will argue that second-order methods should provide an estimate of this distribution if they intend to model epistemic uncertainty in a frequentist manner. For the second research question, we will present in Sections~\ref{subsec:unreg_inner_min}-\ref{subsec:reg_risk_min} and Section~\ref{sec:experiments} novel theoretical and experimental results, which show that the answer to this question is negative, i.e., evidential deep learning methods do not represent epistemic uncertainty in a faithful manner.  




\section{Problem definition and literature review}
Consider a classification or regression task with instance space $\mathcal{X}$ and label space $\mathcal{Y}$.
Without loss of generality, we assume $\mathcal{X} = \mathbb{R}^D$ with $D$ the dimensionality of the data.
Similarly, we assume $\mathcal{Y} = \mathbb{R}^{D'}$ in the case of regression, $\mathcal{Y} = \{1,\ldots,K\}$ in $K$-class classification, and $\mathcal{Y} = \mathbb{N}$ in the case of counts. 
Furthermore, we have training data of the form $\mathcal{D} = \big\{ \big(\vec{x}_i , y_i \big) \big\}_{i=1}^N$, in which each $(\vec{x}_i , y_i)$ is i.i.d.\ sampled via a joint probability measure $P$ on $\mathcal{X} \times \mathcal{Y}$.
Correspondingly, each feature vector $\vec{x} \in \mathcal{X}$ is associated with a conditional distribution $p(\cdot \,|\, \vec{x})$ on $\mathcal{Y}$, such that $p(y \,|\, \vec{x})$ is the probability to observe label $y$ as an outcome given $\vec{x}$. 
More generally, $\mathbb{P}(A)$ will denote the set of all probability distributions on the set $A$. 

\subsection{First-order risk minimization}
The classical statistics and machine learning literature usually considers first-order distributions during risk minimization. More specifically, one has a hypothesis space $\mathcal{H}_1$ containing $\Theta \rightarrow \mathbb{P}(\mathcal{Y})$ mappings with $\Theta$ a parameter space. In fact, $\mathcal{H}_1$ defines probability density (or mass) functions $p(y \,|\, \vec{\theta})$ from a parametric family with parameter vector $\vec{\theta} \in \Theta$, for continuous (or discrete) random variables. In a supervised learning context, $\vec{\theta}$ is a function of the features: we will use the notation $\vec{\theta}(\vec{x}; \vec{\Phi})$, being an $\mathcal{X} \rightarrow \Theta$ function that is parameterized by a set of parameters $\vec{\Phi}$. Risk minimization is then usually defined as
\begin{equation}
    \label{eq:exprisktradi}
		\min_{\vec{\Phi}} \sum_{i=1}^N L_1 \!\left( y_i,  p(y \,|\, \vec{\theta}(\vec{x}_i; \vec{\Phi})) \right).
\end{equation}
 As a loss function $L_1: \mathcal{Y} \times \mathbb{P}(\mathcal{Y}) \rightarrow \mathbb{R}$, often the negative log-likelihood is chosen:
\begin{equation*}
    L_1(y, p(y \,|\, \vec{\theta}(\vec{x}; \vec{\Phi}))) = -\log(p(y \,|\, \vec{\theta}(\vec{x}; \vec{\Phi})) \,.
\end{equation*}
For classical machine learning models, usually a regularization term that penalizes $\vec{\Phi}$ is incorporated in (\ref{eq:exprisktradi}) as well, but this is less common for modern deep learning models. 

\subsection{Second-order risk minimization}
In second-order risk minimization, one defines a second hypothesis space $\mathcal{H}_2$ with $\mathcal{M} \rightarrow \mathbb{P}(\Theta)$ mappings with parameter space $\mathcal{M}$. Thus, $\mathcal{H}_2$ contains probability density functions  $p(\vec{\theta} \,|\, \vec{m})$ from a parametric family with parameter vector $\vec{m}(\vec{x}; \vec{\Phi}) \in \mathcal{M}$ that is a function of $\vec{x}$. The function $\vec{m}(\vec{x}; \vec{\Phi})$, in turn, will also be parameterized by a set of parameters $\vec{\Phi}$, which will be estimated using a training dataset. Below we will show concretely how $\vec{m}(\vec{x}; \vec{\Phi}) \in \mathcal{M}$ can be parameterized in the case of neural networks. 

As shown in Table~\ref{tab:methods}, one considers two different types of optimization problems in second-order risk minimization.
In a first branch of papers, which will be further referred to as \emph{inner loss minimization}, one computes an expectation over $p(\vec{\theta} \,|\, \vec{m}(\vec{x}_i; \vec{\Phi}))$ inside the loss function.
Let $L_1: \mathcal{Y} \times \mathbb{P}(\mathcal{Y})$ be a traditional loss function, and let $R(\vec{\Phi})$ be a penalty term with regularization parameter $\lambda \in \mathbb{R}_{\geq 0}$, then the empirical risk is minimized as follows: 
\begin{equation}
    \label{eq:exprisk1reg}
    \min_{\vec{\Phi}} \sum_{i=1}^N L_1 \!\left( y_i, \mathbb{E}_{\vec{\theta} \sim p(\vec{\theta} \,|\, \vec{m}(\vec{x}_i; \vec{\Phi}))} \!\left[ p(y \,|\, \vec{\theta})\right] \right) + \lambda R(\vec{\Phi}).
\end{equation}
Most of the papers with this formulation focus on the regression setting, adopting various mechanisms for the penalty term $R(\vec{\Phi})$. 

In a second branch of papers, which will be further referred to as \emph{outer loss minimization}, one computes an expectation over $p(\vec{\theta} \,|\, \vec{m}(\vec{x}_i; \vec{\Phi}))$ outside the loss function.
The risk is then defined and minimized in a very similar way: 
\begin{equation}
    \label{eq:exprisk2reg}
    \min_{\vec{\Phi}} \sum_{i=1}^N \mathbb{E}_{\vec{\theta} \sim p(\vec{\theta} \,|\, \vec{m}(\vec{x}_i;\vec{\Phi}))} \!\left[ L_1(y_i, p(y \,|\, \vec{\theta})) \right] + \lambda R(\vec{\Phi}) \,.
\end{equation}
Papers with this formulation mainly focus on the classification setting. Here, too, one constructs various mechanisms for the penalty term $R(\vec{\Phi})$. 

Second-order risk minimization is often presented as an alternative for Bayesian inference in neural networks. Given a training dataset $\mathcal{D}$ and a prior $p(\vec{\Phi})$, Bayesian methods maintain a posterior distribution $p(\vec{\Phi} \,|\, \mathcal{D})$ over the network parameters $\vec{\Phi}$. From the posterior one can also obtain a distribution over the parameter vector $\vec{\theta}$:
$$p(\vec{\theta} \,|\, \vec{x}) = \int \! \vec{\theta}(\vec{x};\vec{\Phi}) \, p(\vec{\Phi} \,|\, \mathcal{D}) \, d\vec{\Phi} \quad \mbox{for all $\vec{x} \in \mathcal{X}.$ }$$

For Bayesian neural networks, computing the posterior is usually very challenging, but deep evidential learning methods avoid the usage of sophisticated inference algorithms \citep{Ulmer2023}: Instead of defining a prior over the weights of the neural network, one can also define a prior over the parameters $\vec{\theta}$ of the first-order probability distribution \citep{biss_ag16}. That is why in evidential deep learning the conjugate prior of $p(y \,|\, \vec{\theta})$ is often picked as a hypothesis space for $p(\vec{\theta} \,|\, \vec{m})$.

\subsection{The exponential family} \label{subsec:exponential_family}
All previous studies consider exponential family distributions as the underlying hypothesis space for $\mathcal{H}_1$ and $\mathcal{H}_2$. The exponential family can be used to represent classification, regression and count data using Categorical, Normal, and Poisson distributions, respectively. 
For clarity we briefly review the most popular parameterizations.  

In multi-class classification, $\mathcal{H}_1$ contains all categorical distributions with parameter vector $\vec{\theta} = (\theta_1, \ldots , \theta_K) \in \triangle^K$ for $\triangle^K$ being the probability $(K-1)$-simplex. As a conjugate prior, $\mathcal{H}_2$ contains all Dirichlet distributions with parameter vector $\vec{m} = (m_1,\ldots,m_K) \in \mathbb{R}_+^K $.  
With neural networks it is natural to consider  $m_k(\vec{x}; \vec \Phi) = \exp\{\vec{w}_k^\top \vec{\Psi}(\vec{x})\}$ with $\vec{w}_k$ a parameter vector per class, and $\vec{\Psi} : \mathbb{R}^D \rightarrow \mathbb{R}^C$ a function that maps the input to an embedding via multiple feature learning layers.
The parameter set $\vec{\Phi}$ is then given by $\{\vec{w}_1,\ldots,\vec{w}_K\}$ and all parameters learned in the embedding $\vec{\Psi}$. 
Sometimes the architecture is slightly changed, so that $m_1,...,m_K$ are lower bounded by one \citep{KopetzkiCZGG21}. 

In regression, $\mathcal{H}_1$ typically represents the set of Gaussian distributions with parameters $\vec{\theta} = (\mu,\sigma^2)$. $\mathcal{H}_2$ contains all normal-inverse Gamma (NIG) distributions with parameters $\vec{m} = (\gamma,\nu,\alpha,\beta)$.  
Second-order risk minimization is typically realized via a neural network architecture with one or multiple feature learning layers, represented by a mapping $\vec{\Psi}: \mathbb{R}^D \rightarrow \mathbb{R}^C$, followed by four output neurons with functions $\gamma(\vec{x}; \vec{\Psi}) = \vec{w}_1^\top \vec{\Psi}(\vec{x})$, $\nu(\vec{x} ; \vec{\Psi}) = \exp \!\left\{ \vec{w}_2^\top \vec{\Psi}(\vec{x}) \!\right\}$, $\alpha(\vec{x} ; \vec{\Psi}) = \exp \!\left\{ \vec{w}_3^\top \vec{\Psi}(\vec{x}) \!\right\}$ and,
     $\beta(\vec{x} ; \vec{\Psi}) = \exp \!\left\{ \vec{w}_4^\top \vec{\Psi}(\vec{x}) \!\right\}$. 
Here the exponential function guarantees that the parameters $\nu$, $\alpha$ and $\beta$ are positive. Sometimes the softplus function is used as an alternative~\citep{amini2020deep}.
The parameter set $\vec{\Phi}$ is hence given by $\{\vec{w}_1,\vec{w}_2,\vec{w}_3,\vec{w}_4\}$ and all the parameters of the feature learning layers.

For binary classification problems, $\mathcal{H}_1$ and $\mathcal{H}_2$ simplifiy to Bernoulli and Beta distributions. 
For multivariate regression problems, $\mathcal{H}_1$ and $\mathcal{H}_2$ generalize  to multivariate Gaussian and normal-inverse Wishart distributions, respectively \citep{Meinert2022,Bramlage2023}. 
For count data, \citet{Charpentier2022NaturalPN} assume that $\mathcal{H}_1$ contains all Poisson distributions with rate parameter $\theta$, while $\mathcal{H}_2$ contains all Gamma distributions with parameters $\vec{m} = (\alpha,\beta)$. In all these cases, the parameter vector $\vec{m}$ can be learned as the output neurons of a neural network. Figure~\ref{fig:architecture} shows an example of a potential neural network architecture for the binary classification case. 

\begin{figure}
    \centering
    \includegraphics[width=.45\textwidth]{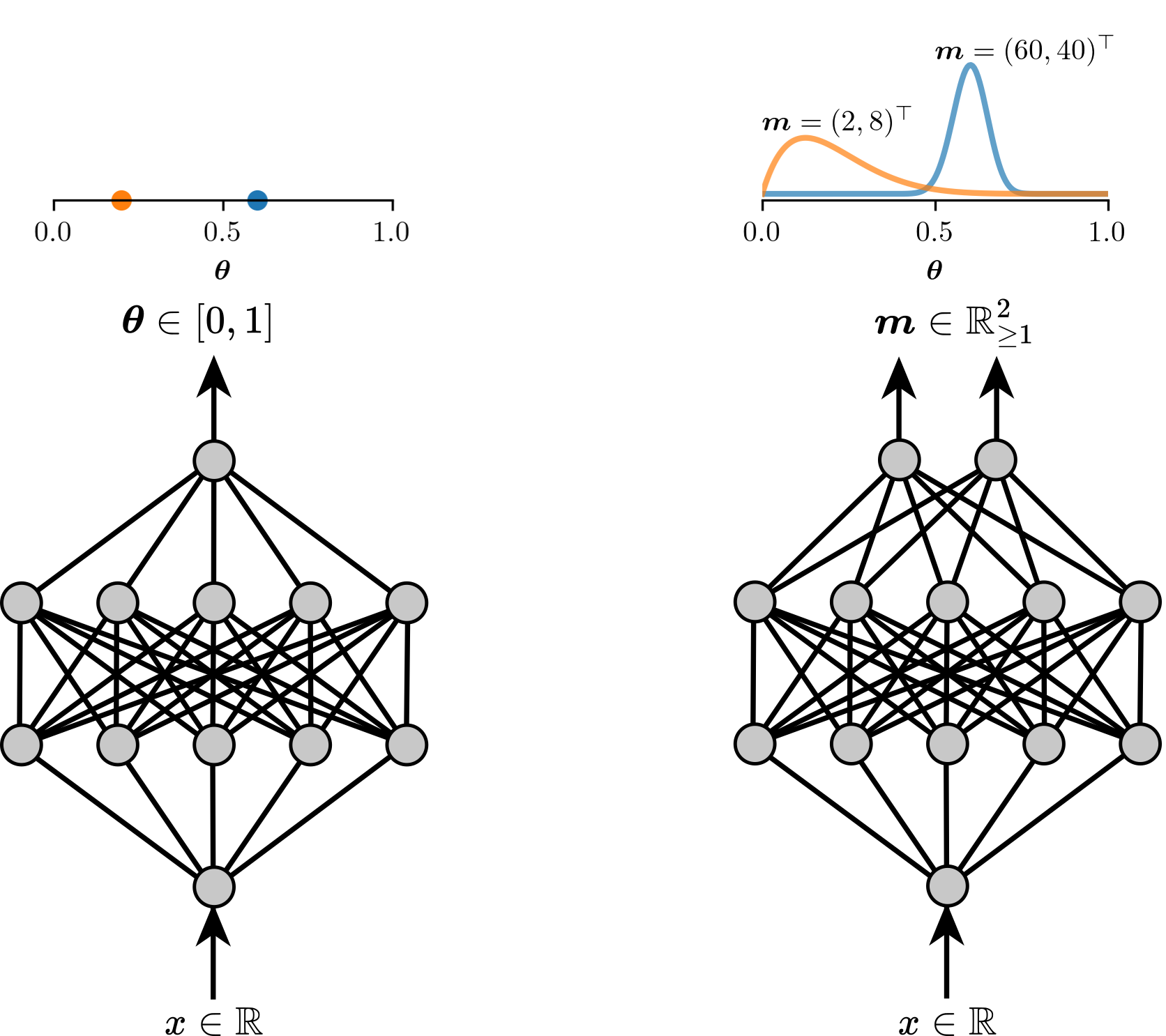}
    \caption{General overview of a neural network architecture for first-order (left) and second-order (right) risk minimization. While the first-order model learns the Bernoulli parameter $\theta$ of the data generating distribution directly, the second-order model predicts the parameters $\vec{m}=(\alpha, \beta)^\top$ of a Beta distribution, which defines a distribution with density $f(\theta|x, \alpha, \beta) = \frac{1}{B(\alpha, \beta)}\theta^{\alpha -1}(1 - \theta)^{\beta -1}$ over $\theta$. The example is aligned with our experimental setup, where the feature space is one-dimensional. }
    \label{fig:architecture}
\end{figure}

\subsection{Quantifying epistemic uncertainty}

In the literature, there is no consensus on how to quantify epistemic uncertainty with second-order risk minimization methods, because various metrics are used. As shown in Table~\ref{tab:methods}, the number of pseudo-counts, defined by $\sum_{k=1}^K m_k$, is a popular metric for classification problems. In regression papers it is defined differently, referred to as \emph{evidence}, and only used as a regularizer~\citep{amini2020deep}. In the latter case, the number of pseudo-counts is given by $\nu$ and $2 \alpha$ \citep{Gelman2013,Meinert2023}, but only $\nu$ is used for estimating the epistemic uncertainty.
This feels unnatural, since both parameters are necessary to capture the spread of the second-order distribution $p(\vec{\theta} \,|\, \vec{m}(\vec{x}; \vec{\Phi}))$.

Mutual information is also often used to quantify epistemic uncertainty of second-order loss minimization methods---as well as Bayesian neural networks \citep{depe_du18}. It can be used for classification and regression, and it is defined as the difference between total uncertainty and aleatoric uncertainty: 
\begin{multline*}
I(\vec{m}) =  \mathrm{H}(\mathbb{E}_{\vec{\theta} \sim p(\vec{\theta} \,|\, \vec{m}(\vec{x}_i; \vec{\Phi}))} \!\left[ p(y \,|\, \vec{\theta})\right] ) \\
- \mathbb{E}_{\vec{\theta} \sim p(\vec{\theta} \,|\, \vec{m}(\vec{x}_i; \vec{\Phi}))} \!\left[ \mathrm{H}(p(y \,|\, \vec{\theta})\right]) \,,
\end{multline*}
with $\mathrm{H}$ being the entropy. Recently, for mutual information,  \citet{wimmer23a} raised the question whether an additive decomposition of total uncertainty in aleatoric and epistemic components makes sense, whereupon a slight modification has been suggested by \citet{schweighofer2023introducing}.

Papers that focus on regression typically use the variance of $\mu$ for quantifying epistemic uncertainty. For NIG distributions the measure becomes: 
$\mathrm{Var}(\mu) = \frac{\beta}{(\alpha-1) \nu} $. 
Some recent works have also investigated variance-based approaches for classification \citep{sale2023secondvar,duan2024evidential}.


 An alternative scalar quantity for epistemic uncertainty is the entropy of $p(\vec{\theta} \,|\, \vec{m}(\vec{x}_i; \vec{\Phi}))$. As far as we know, this measure has only been considered in one recent paper \citep{Kotelevskii2023}, which is surprising, because it is often used as a regularizer --- see Section \ref{subsec:reg_risk_min} for a more in-depth discussion. This last paper is also the only paper that consistently chooses the same measure for regularization and quantification of epistemic uncertainty.  Roughly speaking, one can say that entropy analyzes the uncertainty of $\vec{\theta}$ under $p(\vec{\theta} \,|\, \vec{m}(\vec{x}_i; \vec{\Phi}))$, whereas the covariance matrix rather quantifies the spread around the mean of $\vec{\theta}$. In Appendix~A we provide entropy formulas for NIG, Dirichlet and Gamma distributions. 

Finally, based on distances and reference quantities, another proposal to quantify the different types of uncertainties was recently made \citep{sale2023seconddist}. 
For the concrete choice of the Wasserstein metric, these measures were explicitly determined and calculated for the Dirichlet distribution as an example. 

\section{Theoretical analysis}
\label{sec:theory}
We now return to the two research questions that were formulated in the introduction: 
(a) What properties are needed for second-order risk minimization methods to represent epistemic uncertainty in a quantitatively faithful manner? and (b) Do such methods represent epistemic uncertainty in a faithful manner? The first question will be answered in Section~\ref{subsec:reference_distribution}, and the second one in Sections \ref{subsec:unreg_inner_min}---\ref{subsec:reg_risk_min}. 


\subsection{Reference second-order distribution} \label{subsec:reference_distribution}
Most authors use epistemic uncertainty measures as scoring functions to obtain state-of-the-art performance on downstream tasks such as OOD detection, adversarial robustness and active learning, but they do not intend to further interpret those measures. Yet, there are a few notable exceptions (see Table \ref{tab:overview_contribution}). For the outer maximization case, \citet{bengs2022pitfalls} define two theoretical properties that second-order probability distribution should obey: (1) monotonicity in epistemic uncertainty when the sample size increases, and (2) convergence of the second-order distribution to a Dirac delta function when the sample size grows to infinity. In another paper, \citet{Bengs2023} extend classical properties for proper scoring rules to second-order risk minimization and show that existing second-order losses do not lead to proper scores. For regularized inner loss minimization, \citet{Meinert2023} give theoretical and experimental arguments to advocate that deep evidential regression methods do not represent epistemic uncertainty in a faithful manner. 

The question regarding the ``right'' representation of epistemic uncertainty is a difficult one, akin to the fundamental question of how to do statistical inference in the right way \cite{mart_fu23}. Needless to say, a definite answer to this question cannot be expected. For the purpose of our analysis, we opt for a pragmatic approach specifically tailored to our setting, in which the second-order model is essentially meant to capture the uncertainty of a first-order predictor learned from the training data. The latter is the empirical minimizer of \eqref{eq:exprisktradi}, which is extended to \eqref{eq:exprisk1reg} or \eqref{eq:exprisk2reg} by the second-order learner. The randomness of this minimizer is due to the randomness of the training data $\mathcal{D}$.\footnote{In practice, additional randomness might be caused by the learning algorithm, which might not be guaranteed to deliver the true minimizer.} Theoretically, the uncertainty is therefore reflected by the distribution of the first-order predictor induced by the sampling process that generates the training data. This leads us to the notion of a \emph{reference distribution}, and the idea that the epistemic uncertainty predicted by the second-order learner should ideally be represented by this distribution. 


\begin{definition}
\label{def:faithfulness}
Let $\vec{\theta}_{\mathcal{D}_N}(\vec{x} ;\vec{\Phi}_{\mathcal{D}_N})$ denote the minimizer of \eqref{eq:exprisktradi} for a training set $\mathcal{D}_N$ of size $N$, where $\mathcal{D}_N \sim P^N$. For a given $\vec{x} \in \mathcal{X}$, we define the reference second-order distribution as the conditional distribution induced by $\vec{\theta}_{\mathcal{D}_N}(\vec{x} ;\vec{\Phi}_{\mathcal{D}_N})$:
\begin{align} \label{def:reference_dist}
\begin{split}
    q_N(\vec{\theta} \, | \, \vec{x})  :=& \mathbb{P}\big(\vec{\theta}_{\mathcal{D}_N}(\vec{x} ;\vec{\Phi}_{\mathcal{D}_N}) = \vec{\theta}\big) \\
    =& \int_{(\mathcal{X} \times \mathcal{Y})^N} \! 
\mathbb{I} \big[ \vec{\theta}_{\mathcal{D}_N}(\vec{x} ;\vec{\Phi}_{\mathcal{D}_N}) = \vec{\theta} \big] \, d P^N \, ,
\end{split}
\end{align}
where $\mathbb{I}[ \cdot ]$ is the indicator function. In words, $q_N(\vec{\theta} \, | \, \vec{x})$ is the probability (density) of obtaining the first-order prediction $\vec{\theta}$ when training on a random data set of size $N$ and using the induced model for predicting on $\vec{x}$. 
\end{definition}

Intuitively, this definition follows a frequentist statistics viewpoint, in which a ground-truth model is estimated using limited training data, and epistemic uncertainty is caused by the sum of squared bias and variance of the estimate. The bias-variance decomposition is frequentist in nature, because it assumes a ground-truth model that needs to be estimated using a training dataset of limited size. Typically, the assumption is made that a training dataset of a given size can be resampled to quantify bias (by comparison of the mean to the ground truth) and variance in an objective way.  
More precisely, one can resample the training datasets $d$ times from the underlying distribution, resulting in $d$ empirical risk minimisers $\{\hat{\vec{\theta}}_1, \dots, \hat{\vec{\theta}}_d\}$, which then allow to compute the empirical (cumulative) distribution function: \begin{equation*}
    \hat{Q}_N(\vec{\theta}|\vec{x}) := \frac{1}{d}\sum_{i=1}^d \mathbb{I}(\hat{\vec{\theta}_i}(\vec{x}, \vec{\Phi}) \leq \vec{\theta}).
\end{equation*}  
Moreover, note that the reference second-order distribution depends on various design choices for first-order risk minimization, such as the hypothesis space, loss function and optimization algorithm. As such, we believe that the reference should not necessarily be interpreted as a \emph{ground-truth} epistemic uncertainty representation, but it defines the distribution that might be expected when compatible hypothesis spaces, loss functions and optimization algorithms are used for first-order and second-order risk minimization (i.e.\ for second-order risk minimization one could use the same neural network architecture and optimizer as for first-order risk minimization, and the likelihood of the former might correspond to the conjugate prior of the latter). 
The second-order prediction $p(\vec{\theta} \,| \, \vec{m}(\vec{x};  \vec{\Phi}))$ should ideally be close to the reference $q_N(\vec{\theta} \, | \, \vec{x})$, which can be quantified with any distance metric on probability distributions \citep{rachev2013methods}.

\subsection{Unregularized inner loss minimization}
\label{subsec:unreg_inner_min}
Let us now focus on the second research question. We aim to investigate whether second-order risk minimization methods represent epistemic uncertainty in a faithful manner by analyzing inner and outer loss minimization more in detail. We start by analyzing \eqref{eq:exprisk1reg}, and we first consider the case where the regularization parameter $\lambda$ is zero to simplify the analysis.
Similar to Bayesian model averaging, the two hypothesis spaces $\mathcal{H}_1$ and $\mathcal{H}_2$ can be combined to a joint hypothesis space $\mathcal{H}_{\!J}$ containing $\mathcal{M} \rightarrow \mathbb{P}(\mathcal{Y})$ operators that are defined by
\begin{equation*}
    \label{eq:int}
    p(y \,|\, \vec{m}(\vec{x})) = \mathbb{E}_{\vec{\theta} \sim p(\vec{\theta} \,|\, \vec{m}(\vec{x}; \vec{\Phi}))} \!\left[  p(y \,|\, \vec{\theta}) \right]. \nonumber 
\end{equation*}  
This distribution is known as the predictive distribution in Bayesian statistics, and for exponential family members it can be computed analytically---see Appendix~\ref{sec:proofinject}. As a result, optimization problem \eqref{eq:exprisk1reg} without penalty term can be simplified to
\begin{equation}
    \label{eq:exprisk1}
    \hat{\vec{\Phi}} 
    = \operatorname{arg} \min_{\vec{\Phi}} \, \sum_{i=1}^N L_1(y_i, p(y \,|\, \vec{m}(\vec{x}_i; \vec{\Phi}))) \,.
\end{equation}
The following theorem, in which we analyse the predictive distribution for various types of models, sheds more light on epistemic uncertainty quantification using optimization problem \eqref{eq:exprisk1}. 
\begin{theorem}
\label{thm:exprisk1}
Let $\mathcal{C}_1$ and $\mathcal{C}_J$ be the co-domains of $\mathcal{H}_1$ and $\mathcal{H}_J$. The following properties hold when $\mathcal{H}_2$ consists of
\begin{itemize}
[noitemsep,topsep=0pt,leftmargin=4mm]
\item Dirichlet distributions: $\mathcal{H}_J$ is not injective, and $\mathcal{C}_1 = \mathcal{C}_J$;
\item  NIG distributions:  $\mathcal{H}_J$ is not injective, and $\mathcal{C}_1 \subset \mathcal{C}_J$; 
\item Gamma distributions: $\mathcal{H}_J$ is injective, and $\mathcal{C}_1 \subset \mathcal{C}_J$.
\end{itemize}
\end{theorem}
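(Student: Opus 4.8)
The plan is to reduce all three cases to a single computation, namely the predictive distribution $p(y \mid \vec{m}) = \mathbb{E}_{\vec{\theta} \sim p(\vec{\theta} \mid \vec{m})}[p(y \mid \vec{\theta})]$ that already appears in \eqref{eq:exprisk1}. For each conjugate pair this is a classical marginal: Dirichlet--Categorical yields a Categorical, NIG--Gaussian yields a Student's $t$, and Gamma--Poisson yields a Negative Binomial. Since these marginals are available in closed form (Appendix~\ref{sec:proofinject}), I would simply read off how the parameter vector $\vec{m}$ enters the predictive. Injectivity of $\mathcal{H}_J$ then becomes the question of whether the induced map from $\vec{m}$ to the predictive's parameters has nontrivial fibers, and the co-domain comparison becomes the question of which first-order family the predictive family coincides with or strictly extends.

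For injectivity I would argue case by case. In the Dirichlet case the predictive is Categorical with $p(y = k \mid \vec{m}) = m_k / \sum_j m_j$, which is invariant under the rescaling $\vec{m} \mapsto c\,\vec{m}$ for any $c > 0$; hence the total concentration $\sum_j m_j$ (the pseudo-count governing epistemic spread) is completely unidentifiable and $\mathcal{H}_J$ is not injective. In the NIG case the Student's $t$ predictive depends on $\vec{m} = (\gamma, \nu, \alpha, \beta)$ only through its location $\gamma$, its degrees of freedom $2\alpha$, and its squared scale $\tfrac{\beta(\nu+1)}{\nu\alpha}$; fixing these three quantities still leaves a one-parameter curve of $(\nu,\beta)$ that keeps $\tfrac{\beta(\nu+1)}{\nu}$ constant, so the four-to-three parameter map again has a one-dimensional fiber and $\mathcal{H}_J$ is not injective. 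In the Gamma case the Negative Binomial predictive has parameters $(r, p) = (\alpha, \tfrac{\beta}{\beta+1})$, and this two-to-two map is a bijection onto its image (one recovers $\alpha = r$ and $\beta = p/(1-p)$), so $\mathcal{H}_J$ is injective.

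For the co-domains I would identify each predictive family explicitly. For Dirichlet every Categorical distribution arises as some $m_k / \sum_j m_j$ and conversely, so (under the matching open-simplex convention for both parameterizations) $\mathcal{C}_J$ is exactly the set of Categoricals and $\mathcal{C}_1 = \mathcal{C}_J$. For NIG and Gamma the predictive families are the Student's $t$ and Negative Binomial families, which are strictly richer than the Gaussian and Poisson first-order families: a Gaussian is recovered only in the limit $\alpha \to \infty$ of the Student's $t$ (infinite degrees of freedom), and a Poisson only in the limit $\alpha = r \to \infty$ of the Negative Binomial, while genuinely heavy-tailed $t$-distributions and overdispersed Negative Binomials lie in $\mathcal{C}_J \setminus \mathcal{C}_1$. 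This gives the strict inclusions $\mathcal{C}_1 \subset \mathcal{C}_J$.

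The main obstacle I anticipate is making the inclusions $\mathcal{C}_1 \subset \mathcal{C}_J$ precise for the NIG and Gamma cases, since the first-order Gaussians and Poissons are not literal members of the Student's $t$ and Negative Binomial families but only their limiting elements. I would address this by stating the containment in the appropriate closure/limiting sense and by exhibiting explicit witnesses (e.g.\ a finite-$\alpha$ Student's $t$, whose tails decay polynomially rather than like a Gaussian) to certify strictness, so that $\mathcal{C}_J$ is unambiguously larger. A secondary technical point is fixing the boundary convention for the Dirichlet case, ensuring that $\mathcal{C}_1$ and $\mathcal{C}_J$ refer to the same (open or closed) simplex so that the claimed equality holds exactly.
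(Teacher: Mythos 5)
Your proposal follows essentially the same route as the paper's proof: compute the three conjugate predictive distributions (Dirichlet--Categorical gives a Categorical, NIG--Gaussian gives a Student's $t$, Gamma--Poisson gives a negative binomial) and read off injectivity and the co-domain relations from how $\vec{m}$ enters each. If anything, you are more careful than the paper on two points it glosses over: you exhibit the non-injectivity fibers explicitly (the rescaling $\vec{m} \mapsto c\,\vec{m}$ for Dirichlet, the one-parameter $(\nu,\beta)$ curve at fixed scale for NIG), and you correctly flag that Gaussians and Poissons are only \emph{limiting} elements of the Student's $t$ and negative binomial families (degrees of freedom, respectively $r$, tending to infinity), so the inclusions $\mathcal{C}_1 \subset \mathcal{C}_J$ need to be read in the closure sense --- whereas the paper's proof simply asserts that they are ``special cases.''
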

In the proof (see Appendix~\ref{sec:proofinject}) we have to compute the predictive distribution for the three different cases, 
but let us rather discuss the implications of Theorem~\ref{thm:exprisk1}.
For the first two cases, we find that for all $p(y \,|\, \vec{m})$ in the codomain, multiple $\vec{m}$ exist that generate $p(y \,|\, \vec{m})$.
Consequently, also for the risk minimizer $\bar{\vec{m}}$ induced by $\vec{\Phi}$ in \eqref{eq:exprisk1}, there exists another $\vec{m} \in \mathcal{M}$ that results in the same $p(y \,|\, \vec{m})$.
Thus, the risk minimizer is not uniquely defined. When using the entropy of $p(\vec{\theta} \,|\, \vec{m}(\vec{x}_i; \vec{\Phi}))$ to quantify epistemic uncertainty, one can see that a wide spectrum of values is covered, so different degrees of epistemic uncertainty can be obtained. For NIG distributions, considering second-order risk minimization brings increased flexibility, because $\mathcal{H}_1$ maps Gaussian distributions, whereas $\mathcal{H}_J$ models Student's t-distributions. For  Dirichlet distributions, no flexibility gains are observed, because $\mathcal{H}_1$ and $\mathcal{H}_J$ both map Categorical distributions. Moreover, in Appendix~\ref{sec:convexity} we show that NIG and Dirichlet distributions also differ in a second way: for the latter type of models, the optimization problem is convex when the feature mapping $\vec{\Psi}$ is a linear transformation and $L_1$ the negative log-likelihood, but convexity is not preserved for the former type of models.  

For Gamma distributions, $\mathcal{H}_J$ is injective, so this case again differs from the other two cases. Due to the injectiveness, an identifiability problem does not appear. Second-order risk minimization has as main advantage that a more flexible class of distributions is considered for the data: negative binomial distributions instead of Poisson distributions. However, there is a one-to-one mapping between the parameters of the Gamma distributions and the parameters of the negative binomial distribution. Thus, epistemic uncertainty quantification is also arbitrary in this case, because the risk minimizer yields a Gamma distribution that results in the best negative binomial fit. Consequently, no guarantee can be given that the returned second-order distribution is close to the reference distribution, which was defined in Def.~\ref{def:faithfulness}.



\subsection{Unregularized outer loss minimization}
\label{subsec:unreg_outer_min}
We now analyze optimization problem \eqref{eq:exprisk2reg} with $\lambda=0$: 
\begin{equation}
    \label{eq:exprisk2}
    \hat{\vec{\Phi}} = \operatorname{arg} \min_{\vec{\Phi}} \sum_{i=1}^N \mathbb{E}_{\vec{\theta} \sim p(\vec{\theta} \,|\, \vec{m}(\vec{x}_i; \vec{\Phi}))} \!\left[ L_1(y_i, p(y \,|\, \vec{\theta})) \right] \,.
\end{equation}
Similarly as for inner loss minimization,  closed-form expressions exist for the second-order loss (see Appendix~D and \citet{Charpentier2022NaturalPN}). Yet, it turns out that optimization problem \eqref{eq:exprisk2} has different characteristics than optimization problem \eqref{eq:exprisk1}. Here comes our main result for this subsection:
\begin{theorem}
    \label{thm:exprisk2}
    Let $L_1: \mathcal{Y} \times \mathbb{P}(\mathcal{Y})$
    be convex in its second argument, and let $\mathcal{H}_2$ be a hypothesis space that contains universal approximators, that is, for all $p: \mathcal{X} \times \Theta \rightarrow [0,1]$ and $\epsilon > 0$ there exists $\vec{\Phi}$ such that 
    \begin{equation*}
        \sup_{\vec{\theta} \in \Theta,\, \vec{x} \in \mathcal{X}} \left\| p(\vec{\theta}(\vec{x}))- p(\vec{\theta} \,|\, \vec{m}(\vec{x}_i; \vec{\Phi})) \right\| \leq \epsilon \,.
    \end{equation*}
     A parameter set $\vec{\Phi}$ that yields $ p(\vec{\theta} \,|\, \vec{m}(\vec{x}_i;\hat{\vec{\Phi}})) = \delta(\vec{\theta}(\vec{x}_i))$ for all $i \in \{1,\ldots,N\}$ is a minimizer of \eqref{eq:exprisk2}, where $\delta$ is the Dirac delta function on $\vec{\theta}(\vec{x}_i) \in \Theta$.
\end{theorem}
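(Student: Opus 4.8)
The plan is to exploit the fact that, with $\lambda = 0$, the objective in \eqref{eq:exprisk2} decouples into a sum of $N$ per-example terms, each of which depends on the network only through the single second-order distribution $\pi_i := p(\vec{\theta}\,|\,\vec{m}(\vec{x}_i;\vec{\Phi}))$ it outputs at $\vec{x}_i$. Writing $\ell_i(\vec{\theta}) := L_1(y_i, p(y\,|\,\vec{\theta}))$, the $i$-th term is the \emph{linear} functional $\pi_i \mapsto \mathbb{E}_{\vec{\theta}\sim\pi_i}[\ell_i(\vec{\theta})]$. The first, decisive step is the elementary bound
\begin{equation*}
\mathbb{E}_{\vec{\theta}\sim\pi_i}[\ell_i(\vec{\theta})] \;\geq\; \inf_{\vec{\theta}\in\Theta} \ell_i(\vec{\theta}),
\end{equation*}
valid for every distribution $\pi_i$, with equality precisely when $\pi_i$ is supported on the set of minimizers of $\ell_i$ --- in particular for the Dirac measure $\delta(\vec{\theta}(\vec{x}_i))$ placed at a minimizer $\vec{\theta}(\vec{x}_i) \in \arg\min_{\vec{\theta}} \ell_i(\vec{\theta})$. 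Convexity of $L_1$ in its second argument enters here only in a supporting role: it guarantees that $\ell_i$ attains its infimum, so that $\vec{\theta}(\vec{x}_i)$ is well defined, and, through Jensen's inequality, it certifies that the expected loss always dominates the loss of the averaged predictive distribution, reinforcing that no genuinely spread-out $\pi_i$ can outperform a point mass.

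The second step is to realize all $N$ per-example optima \emph{simultaneously} with a single parameter set, which is exactly where the universal-approximation hypothesis on $\mathcal{H}_2$ is needed. Because each summand depends on $\vec{\Phi}$ only through $\pi_i$, and because distinct inputs $\vec{x}_1,\dots,\vec{x}_N$ can be mapped to \emph{independent} outputs by a universal approximator, the per-example lower bounds are jointly achievable and the global infimum of \eqref{eq:exprisk2} equals $\sum_{i=1}^N \inf_{\vec{\theta}}\ell_i(\vec{\theta})$. Concretely, I would fix the target mapping $\vec{x}\mapsto \delta(\vec{\theta}(\vec{x}))$ that sends each $\vec{x}_i$ to the Dirac at its minimizer, invoke the approximation property to obtain a $\hat{\vec{\Phi}}$ for which $p(\vec{\theta}\,|\,\vec{m}(\vec{x}_i;\hat{\vec{\Phi}}))$ is arbitrarily close to these Diracs for all $i$, and conclude that the resulting objective value equals the global lower bound. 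Hence any $\hat{\vec{\Phi}}$ attaining the Dirac configuration is a global minimizer.

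I expect the main obstacle to be the boundary/limiting nature of the Dirac. For the hypothesis spaces of interest (Dirichlet, NIG, Gamma) a genuine point mass is not attained at finite parameter values, so the equality $p(\vec{\theta}\,|\,\vec{m}(\vec{x}_i;\hat{\vec{\Phi}})) = \delta(\vec{\theta}(\vec{x}_i))$ must be read as the limit of a sequence of increasingly concentrated second-order densities. Making this rigorous requires (i) that $\mathbb{E}_{\vec{\theta}\sim\pi}[\ell_i(\vec{\theta})] \to \ell_i(\vec{\theta}(\vec{x}_i))$ as $\pi$ converges weakly to the Dirac, which follows from continuity of $\ell_i$ together with a uniform-integrability or local-boundedness condition near $\vec{\theta}(\vec{x}_i)$, and (ii) interpreting the universal-approximation statement --- whose supremum is over bounded densities --- in the weak topology, so that arbitrarily peaked members of $\mathcal{H}_2$ are admissible approximants. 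A secondary subtlety is that for losses such as the negative log-likelihood the minimizer $\vec{\theta}(\vec{x}_i)$ is itself degenerate (e.g.\ $\sigma^2 \to 0$ with $\mu = y_i$ in the Gaussian case), so the infimum is approached along the boundary of $\Theta$; I would handle this by working with the closure of $\Theta$ and presenting the infimal value as attained only in the limit, which is precisely the collapse phenomenon the theorem is meant to expose.
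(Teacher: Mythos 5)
Your proposal is correct in its essentials but takes a genuinely different route from the paper's proof. You bound each per-example term by the elementary inequality $\mathbb{E}_{\vec{\theta}\sim\pi_i}[\ell_i(\vec{\theta})] \geq \inf_{\vec{\theta}\in\Theta}\ell_i(\vec{\theta})$ and place the Dirac at the per-example minimizer of the \emph{first-order} loss, so that the Dirac configuration attains the global lower bound $\sum_{i}\inf_{\vec{\theta}}\ell_i(\vec{\theta})$; convexity plays no essential role in this argument, as you yourself observe. The paper instead uses convexity centrally: by Jensen's inequality the outer risk \eqref{eq:exprisk2} is lower-bounded by the \emph{inner} risk \eqref{eq:exprisk1}; it then takes a minimizer $\tilde{\vec{\Phi}}$ of the inner risk, defines $\tilde{\vec{\theta}}(\vec{x}_i) = \mathbb{E}_{\vec{\theta}\sim p(\vec{\theta}\,|\,\vec{m}(\vec{x}_i;\tilde{\vec{\Phi}}))}[\vec{\theta}]$, and shows that Diracs at these \emph{means} achieve the Jensen bound, using the identity that the predictive distribution of a Dirac at the mean coincides with the predictive distribution of $p(\vec{\theta}\,|\,\vec{m}(\vec{x}_i;\tilde{\vec{\Phi}}))$. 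Your route is more elementary, additionally characterizes \emph{all} minimizers (each $\pi_i$ must be supported on the minimizer set of $\ell_i$), and avoids the exchange $\mathbb{E}[p(y\,|\,\vec{\theta})] = p(y\,|\,\mathbb{E}[\vec{\theta}])$ on which the paper's final step rests --- an identity that is exact for the Categorical/Dirichlet case, where $p(y\,|\,\vec{\theta})$ is linear in $\vec{\theta}$, but not in general. What the paper's route buys is the conceptual link between the two optimization problems: the outer minimizer is exhibited as the inner-loss solution with all spread removed, which matches the narrative connecting Theorems~\ref{thm:exprisk1} and~\ref{thm:exprisk2}.

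One claim in your write-up should be repaired: convexity of $L_1$ in its second argument does \emph{not} guarantee that $\ell_i(\vec{\theta}) = L_1(y_i, p(y\,|\,\vec{\theta}))$ attains its infimum over $\Theta$. For the Gaussian negative log-likelihood the infimum is $-\infty$ (take $\mu = y_i$ and $\sigma^2 \to 0$), and for the categorical case it sits at a vertex of the simplex; convexity lives on $\mathbb{P}(\mathcal{Y})$ and says nothing about the image of $\Theta$ under $\vec{\theta} \mapsto p(y\,|\,\vec{\theta})$. Your final paragraph acknowledges exactly this degeneracy, which contradicts the earlier attribution. The consistent reading --- shared by the paper, whose proof likewise assumes the inner-risk minimizer exists and that the Dirac configuration is realizable in $\mathcal{H}_2$ --- is that attainment is presupposed by the theorem's own phrasing ($\vec{\theta}(\vec{x}_i) \in \Theta$ and $p(\vec{\theta}\,|\,\vec{m}(\vec{x}_i;\hat{\vec{\Phi}})) = \delta(\vec{\theta}(\vec{x}_i))$), and the limiting/boundary caveat you describe applies equally to both proofs.
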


The proof can be found in Appendix~\ref{sec:proofouter}. The theorem implies that, for universal approximators such as deep neural networks, one of the risk minimizers is a Dirac distribution.
Such a distribution cannot correspond to the reference distribution in Def.~\ref{def:faithfulness}, since a single configuration $\vec{\theta}$ becomes the only candidate to represent $p(y \,|\, \vec{\theta})$.
This is unwanted, because any practical machine learning setting deals with training datasets of limited size. Thus, the epistemic uncertainty should not disappear.

Theorem~\ref{thm:exprisk2} differs from Theorem~\ref{thm:exprisk1} in the sense that non-uniqueness of the risk minimizer is not shown explicitly.
Theorem~\ref{thm:exprisk2} rather indicates that one of the risk minimizers leads to unwanted Dirac distributions, and by means of experiments we will show that such distributions are indeed found with  optimization routines. Let us remark that the theorem generalizes Theorem~1 by \citet{bengs2022pitfalls} in two ways: (1) it analyzes all exponential family members instead of only Dirichlet distributions, which makes the theorem applicable to a wider range of settings,  and (2) it considers universal approximators that map a feature vector to a second-order distribution using $p(\vec{\theta} \,|\, \vec{m}(\vec{x}_i; \vec{\Phi}))$, which makes the theorem practically more relevant.

\begin{figure*}
    \centering
    \includegraphics[width=\textwidth]{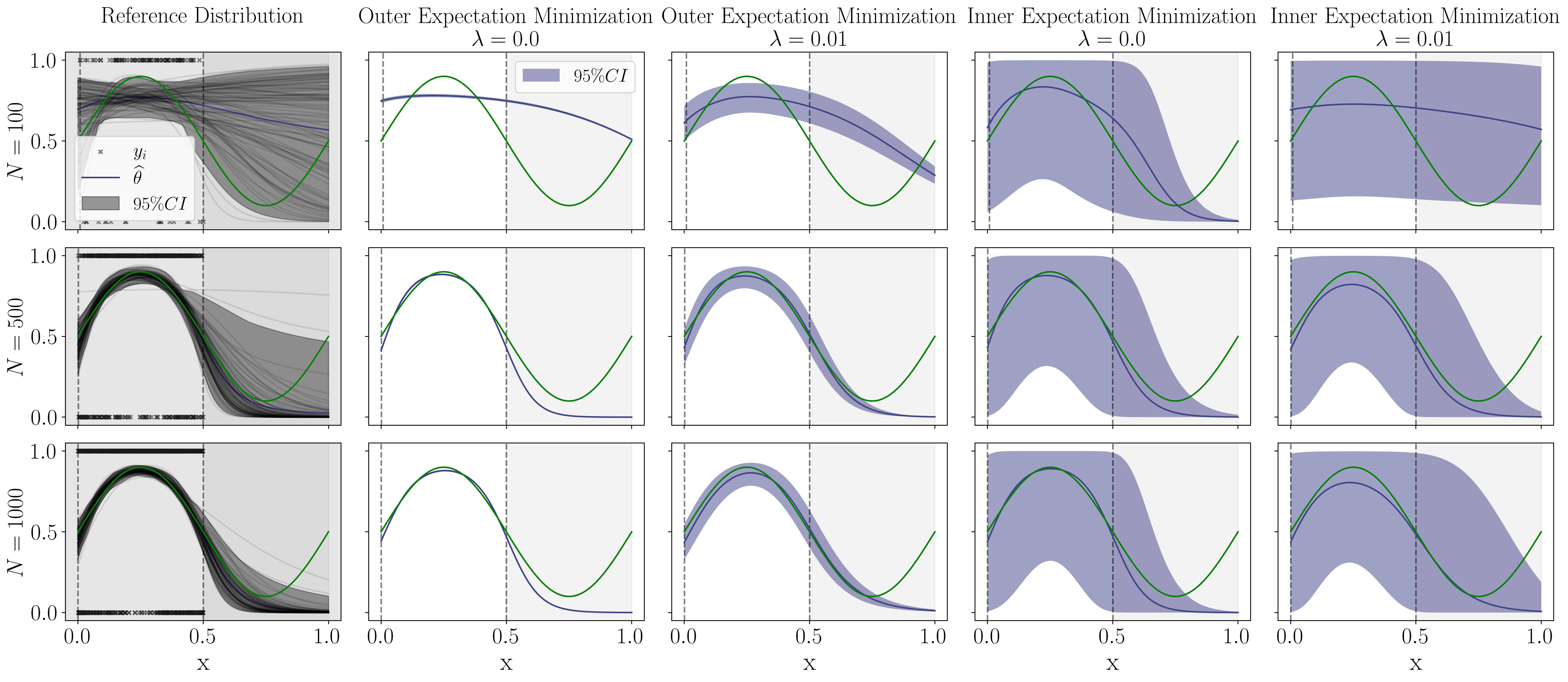}
    \caption{Binary classification experiments for training sample size $N \in \{100, 500, 1000\}$.  The true $\theta$ as a function of $x$ is shown in green. The mean estimated $\theta$ for the reference distribution and the second-order models is given in blue. Confidence bounds of the reference distribution (visualized in grey) are obtained by resampling the training data $100$ times. Confidence bounds for the models trained via second-order risk minimization (visualized in purple) are obtained by the confidence intervals of the learned Beta distribution defined by its $2.5\%$ and $97.5\%$ quantile. The black dots denote one training dataset. See main text for experimental setup. }
    \label{fig:distributions}
\end{figure*}
\begin{figure}
    \centering
\includegraphics[width=.5\textwidth]{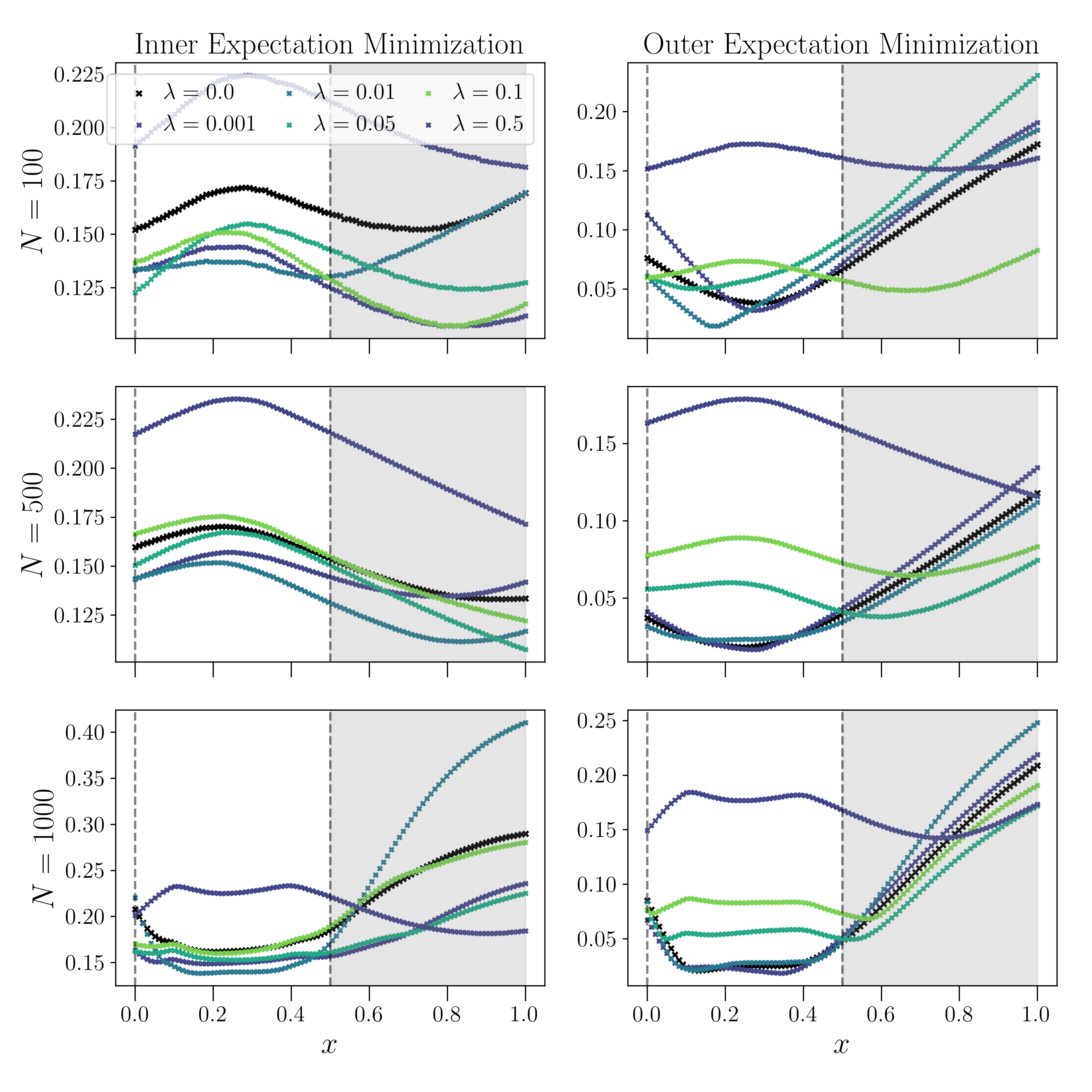}
    \caption{Empirical Wasserstein-$1$ distance between the reference and the estimated second-order distributions for $\lambda \in [0.0, 0.001, 0.01, 0.05, 0.1, 0.5]$. The distance is calculated for $100$ equidistant points in $\mathcal{X}= [0,1]$, by evaluating empirical and second-order distributions in the parameter space and calculating the average $L_1$ distance.}
    \label{fig: distance analysis}
\end{figure}

\subsection{Regularized risk minimization}
\label{subsec:reg_risk_min}
We now take a look at the regularized risk minimization problems that were presented in \eqref{eq:exprisk1reg} and \eqref{eq:exprisk2reg}.
Needless to say, regularization is helpful to remedy some of the issues that were raised so far, but some complications arise, because different authors use different regularizers (see  Table~\ref{tab:methods}). We restrict our analysis to a popular group of regularizers, which enforce $p(\vec{\theta} \,|\, \vec{m}(\vec{x}_i; \vec{\Phi}))$ to look similar to a predefined distribution $p(\vec{\theta} \,|\, \vec{m}_0)$:
\begin{equation}
    \label{eq:kl}
    R(\vec{\Phi}) = \sum_{i=1}^N \mathrm{d}_\mathrm{KL}(p(\vec{\theta} \,|\, \vec{m}(\vec{x}_i; \vec{\Phi})),p(\vec{\theta} \,|\, \vec{m}_0)) \,,
\end{equation}
where $\mathrm{d}_\mathrm{KL}$ denotes the KL-divergence between the two distributions. When $\vec{m}_0$ parameterizes the uniform distribution (on first-order distributions), minimizing KL-divergence corresponds to maximizing the entropy of $p(\vec{\theta} \,|\, \vec{m}(\vec{x}_i; \vec{\Phi}))$, so in fact most regularizers considered in Table~\ref{tab:methods} are considered in our analysis. The main result for regularized second-order loss minimization is presented in the following theorem.  


\begin{theorem}
\label{thm:regrisk}
Consider optimization problems \eqref{eq:exprisk1reg} and \eqref{eq:exprisk2reg} with the regularizer defined by \eqref{eq:kl}. Then, for any data-generating distribution $P$, there exist $\lambda \geq 0$ and $\vec{x} \in \mathcal{X}$ for which the second-order distribution differs from the reference distribution in Def.~\ref{def:faithfulness}. 
\end{theorem}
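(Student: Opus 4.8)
The plan is to exploit an asymmetry in dependencies: the reference distribution $q_N(\vec{\theta}\,|\,\vec{x})$ of Def.~\ref{def:faithfulness} is determined by the data-generating measure $P$ and the sample size $N$ alone, and in particular does \emph{not} depend on the regularization parameter $\lambda$; by contrast, the second-order prediction $p(\vec{\theta}\,|\,\vec{m}(\vec{x};\hat{\vec{\Phi}}))$ returned by \eqref{eq:exprisk1reg} or \eqref{eq:exprisk2reg} does depend on $\lambda$, and its target $\vec{m}_0$ is a \emph{fixed design choice} selected independently of $P$. I would argue by contradiction: suppose that for the given $P$ we had $p(\vec{\theta}\,|\,\vec{m}(\vec{x};\hat{\vec{\Phi}}(\lambda)))=q_N(\vec{\theta}\,|\,\vec{x})$ for \emph{every} $\lambda\ge 0$ and every $\vec{x}$. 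Since the right-hand side is constant in $\lambda$, the second-order solution would then have to be independent of $\lambda$, and the contradiction is obtained by computing its two extremes, $\lambda=0$ and $\lambda\to\infty$, and showing they differ.

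For the large-$\lambda$ extreme I would first bound the optimal objective value. Evaluating the regularized risk at a parameter set $\vec{\Phi}_0$ that realizes $\vec{m}(\vec{x}_i;\vec{\Phi}_0)=\vec{m}_0$ at all training points (which exists by the universal-approximation assumption already used in Theorem~\ref{thm:exprisk2}) makes the penalty \eqref{eq:kl} vanish and leaves only a data term equal to a finite constant $C_0$ independent of $\lambda$. Hence any minimizer satisfies $\lambda\,R(\hat{\vec{\Phi}}(\lambda))\le C_0$, so $R(\hat{\vec{\Phi}}(\lambda))\to 0$; since the KL-divergence vanishes only when its arguments coincide (Pinsker), $p(\vec{\theta}\,|\,\vec{m}(\vec{x}_i;\hat{\vec{\Phi}}(\lambda)))\to p(\vec{\theta}\,|\,\vec{m}_0)$ at the training points. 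Taking the query point $\vec{x}$ in the support of $P$, this collapse propagates to $\vec{x}$, so the large-$\lambda$ prediction converges to the fixed, $P$-independent distribution $p(\vec{\theta}\,|\,\vec{m}_0)$.

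It remains to contrast this with the $\lambda=0$ endpoint and with $q_N$, where the two branches split. For outer minimization \eqref{eq:exprisk2reg}, Theorem~\ref{thm:exprisk2} gives that at $\lambda=0$ one minimizer is a Dirac $\delta(\vec{\theta}(\vec{x}))$; for the standard entropy/uniform choice of $\vec{m}_0$ the limit $p(\vec{\theta}\,|\,\vec{m}_0)$ is non-degenerate, so the two extremes differ and the assumed $\lambda$-independence is violated. For inner minimization \eqref{eq:exprisk1reg}, Theorem~\ref{thm:exprisk1} shows the $\lambda=0$ minimizer is not even unique, so it cannot be pinned to the single distribution $q_N$; again $\lambda$-independence fails. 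To exhibit the witnessing pair constructively one then picks the disagreeing extreme: for \emph{generic} $P$ the mean of $q_N(\vec{\theta}\,|\,\vec{x})$ tracks the true first-order parameter $\vec{\theta}^{\ast}(\vec{x})$, which varies with $\vec{x}$, whereas the large-$\lambda$ limit $p(\vec{\theta}\,|\,\vec{m}_0)$ has a fixed mean, so a witnessing $\vec{x}$ exists; for the degenerate family of $P$ whose true parameter equals the mean of $p(\vec{\theta}\,|\,\vec{m}_0)$ everywhere, one instead takes $\lambda=0$ and invokes the Dirac/non-uniqueness obstruction, noting that finite $N$ keeps $q_N$ genuinely spread.

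The step I expect to be the main obstacle is the propagation of the regularizer's effect from the \emph{training} points $\vec{x}_i$, where \eqref{eq:kl} is actually enforced, to the \emph{query} point $\vec{x}$ at which $q_N$ is defined. Since a universal approximator is unconstrained off the training sample, establishing $p(\vec{\theta}\,|\,\vec{m}(\vec{x};\hat{\vec{\Phi}}(\lambda)))\to p(\vec{\theta}\,|\,\vec{m}_0)$ at $\vec{x}$ requires either placing $\vec{x}$ in the support of $P$ and invoking continuity together with the accumulation of training points near $\vec{x}$, or adding a mild regularity assumption on $P$. Making ``differs'' precise — fixing a metric on $\mathbb{P}(\Theta)$ and excluding accidental coincidences between $q_N$ and $p(\vec{\theta}\,|\,\vec{m}_0)$ — is the secondary technical point, and it is handled cleanly by the mean comparison above.
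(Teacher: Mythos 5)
Your proposal is correct in its core mechanism but takes a genuinely different formal route from the paper. The paper's proof (Appendix~\ref{sec:proofregular}) does not argue over $\lambda$-endpoints at all: it proves, Lasso-style, that the penalized problems \eqref{eq:exprisk1reg} and \eqref{eq:exprisk2reg} are equivalent to constrained problems of the form $\min_{\vec{\Phi}} \mathcal{R}(\vec{\Phi})$ subject to $R(\vec{\Phi}) \leq t$, with $t$ a bijection of $\lambda$, and then concludes (informally) that this ``epistemic uncertainty budget'' is fixed by $\lambda$ independently of $P$ and $N$, so an inadequately chosen budget cannot reproduce the reference distribution. Your feasible-point bound $\lambda R(\hat{\vec{\Phi}}(\lambda)) \leq C_0$, forcing $R \to 0$ and hence collapse onto the $P$-independent prior $p(\vec{\theta}\,|\,\vec{m}_0)$, is a direct quantitative way of extracting the same obstruction that the paper encodes in the constraint $R(\vec{\Phi}) \leq t$; your additional use of Theorems~\ref{thm:exprisk1} and \ref{thm:exprisk2} at the $\lambda = 0$ endpoint, and the contradiction scaffolding over all $\lambda$, have no counterpart in the paper's proof (the paper treats those theorems as separate motivation in the main text). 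What your route buys is a more self-contained argument with explicit witnesses and an explicit handling of the degenerate case where $q_N$ happens to coincide with the prior; what the paper's route buys is the budget reformulation itself, which is reused as the interpretive backbone of Section~\ref{subsec:reg_risk_min} and the conclusion.

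Two caveats, both of which you partly flag and neither of which the paper resolves either. First, your bound $\lambda R(\hat{\vec{\Phi}}(\lambda)) \leq C_0$ requires the empirical risk to be bounded below (true for categorical NLL and Brier, not automatic for Gaussian/NIG NLL where the density can blow up); a clean statement should note this. Second, both the loss and the regularizer \eqref{eq:kl} constrain the model only at the training points, so the witnessing $\vec{x}$ is naturally (near) a training point and therefore data-dependent, while Theorem~\ref{thm:regrisk} quantifies over a fixed $\vec{x} \in \mathcal{X}$; your proposed fix (support of $P$ plus continuity, or a mild regularity assumption) is the right kind of patch, and the paper simply does not address this point. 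Neither caveat invalidates your plan for the existence claim actually being made.
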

The proof (see Appendix~\ref{sec:proofregular}) is based on an equivalent formulation of \eqref{eq:exprisk1reg} and \eqref{eq:exprisk2reg} as constrained optimization problems. 
This reformulation reveals the role of the regularizer: it penalizes predictions with a low epistemic uncertainty. More specifically, the estimated epistemic uncertainty for all training data points is summed up, and this sum is fixed to a certain level determined by $\lambda$. In other words, the constraint in the two optimization problems defines an \emph{epistemic uncertainty budget} that cannot be exceeded for the training data points. Now, in light of Def.~\ref{def:faithfulness} we consider for all $\vec{x} \in \mathcal{X}$  a probability distribution defined over the empirical risk minimizer $\hat{\vec{\theta}}$ and epistemic uncertainty quantified by its entropy. If $\lambda$ is chosen sufficiently small, the allowed epistemic uncertainty budget will be too small to represent the epistemic uncertainty of the reference distribution, expressed by entropy or any other epistemic uncertainty measure.     

The main role of the regularizer is to overcome the issues implied by Theorems~\ref{thm:exprisk1} and \ref{thm:exprisk2}. 
Let us take a look first at inner loss minimization. Theorem~\ref{thm:exprisk1} implies that the two terms in \eqref{eq:exprisk1reg} are not conflicting with each other when Dirichlet or NIG distributions are considered for $\mathcal{H}_2$. Among the minimizers of \eqref{eq:exprisk1}, we can simply choose the one that minimizes the regularization term. For Gamma distributions, the situation is slightly different: the two objectives will be conflicting with each other, thus adding a regularization term will yield a higher value for the initial loss function.  
For outer loss minimization, the Dirac delta of Theorem~\ref{thm:exprisk2} will no longer be the risk minimizer. 

\begin{figure*}[t]
    \centering
    \includegraphics[width=\textwidth]{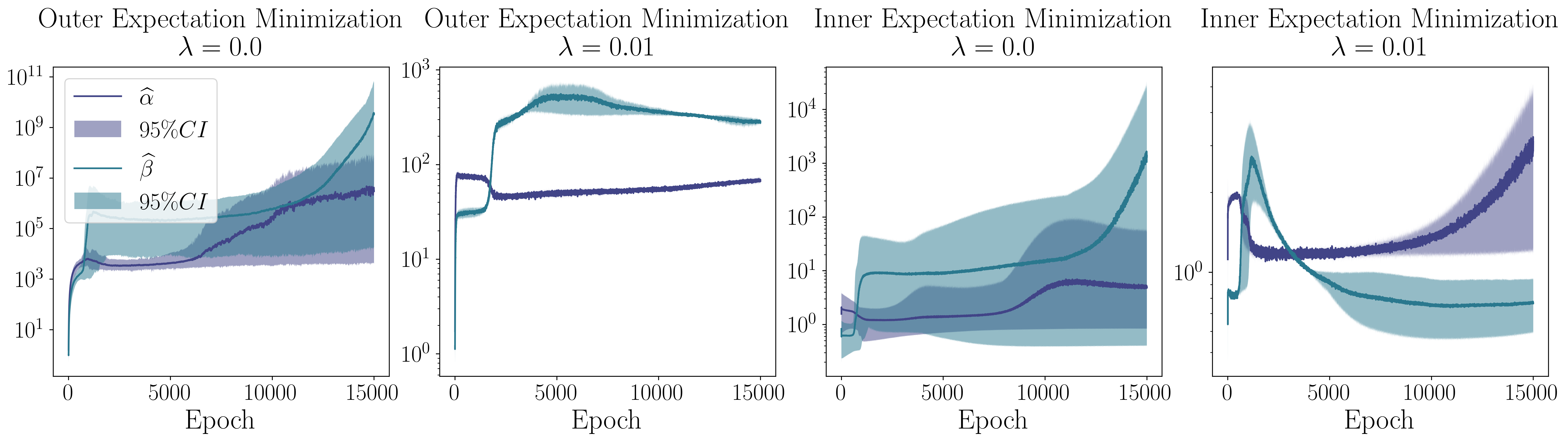}
    \caption{Behavior of the average value of the predicted parameters $\widehat{\alpha}$ and $\widehat{\beta}$, obtained by the second-order learners as a function of the number of training epochs. To obtain the mean and the confidence bounds, $40$ models were trained on the same $N=1000$ instances with different random weight intializations. The average is first taken over the instance space and individual runs are plotted. In addition, the average over different runs is also shown.}
    \label{fig:parameter analysis}
\end{figure*}


\section{Experimental analysis}
\label{sec:experiments}

The main goal of the experiments is to illustrate that our theoretical observations also arise in practice. As argued before, the faithfulness of an epistemic uncertainty representation w.r.t.\ Def.~\ref{def:faithfulness} can only be empirically verified via simulation studies that assume access to the reference distribution $P$.  We performed two simulation studies, covering the classification and the regression case. For the sake of brevity, we only describe the classification case here, but the results for the regression case also confirm the theoretical results (see Appendix~\ref{sec:regresults}). The code for reproducing the experiments is available on GitHub.\footnote{\url{https://github.com/mkjuergens/EpistemicUncertaintyAnalysis}}
\subsection{Setup}
We consider a binary classification setup, because the reference distribution and the second-order fit can be easily visualized in that case.  We generate synthetic training data   
$\mathcal{D}_N = \{(x_i, y_i)\}_{i=1}^N$ of size $N$, with  $x_i \sim \mathrm{Unif}([0,0.5])$, and $y_i \sim \mathrm{Bern}\big(\theta(x_i)\big)$. The Bernoulli parameter $\theta$ is described as a function of the one-dimensional features $x_i$: 
    $\theta(x_i) = 0.5 + 0.4 \, \sin 2\pi x_i$, thus $ \theta(x_i) \in [0.5,0.9]$. 
Intuitively, this looks like a very simple problem, but the aleatoric uncertainty is very high for most $x$, so one needs a lot of training data to estimate $\theta(x)$ with high precision. 

We use the same model architecture for first and second-order risk minimization, consisting of 32 neurons and 2 fully-connected hidden layers. The two approaches only differ in the number of output neurons, which is $1$ for the first-order learner, and $2$ for the second-order learner. The models are trained using the Adam optimizer with a learning rate of $0.0005$ and $5000$ training epochs. For estimating the reference distribution in \eqref{def:reference_dist}, we resample the training set 100 times, estimating each time the first-order risk minimizer $\hat{\theta}_k$ for $k=1, \dots, 100$. In this way the reference distribution of Def.~\ref{def:faithfulness} can be empirically approximated. 

To train the first-order model in \eqref{eq:exprisktradi}, negative log-likelihood is used as a loss function (i.e., logistic loss). The second-order models predict the parameters $\boldsymbol{m}(x) = ( \alpha(x), \beta(x))^\top$ of a Beta distribution and are trained using the loss functions derived from outer and inner expectation minimization, respectively. In both cases, we use closed-form expressions for the loss functions, which avoids the need for sampling---see Appendices \ref{sec:proofinject} and \ref{sec:closedformouter} for derivations. For the regularized models, we use negative entropy of the predicted distribution with $\lambda = 0.01$. 

\subsection{Results}
Figure~\ref{fig:distributions} shows the mean prediction as well as the $95 \%$ confidence levels for the reference model  and the fitted second-order models, trained using different sample sizes. For outer loss minimization without entropy regularization, the conclusions of Theorem~\ref{thm:exprisk2} are nicely reflected in the results: The narrow confidence bounds around the mean prediction indicate a distribution very close to a Dirac delta. When regularization is added, the confidence bounds become wider, but still deviate substantially from those obtained for the reference distribution, especially in the part of the input space where no data has been observed. Hence the epistemic uncertainty quantification by the second-order models is arbitrary and tightly bound to the choice of the regularization strength $\lambda$, and this phenomenon occurs over various sample sizes. For inner loss minimization, a higher epistemic uncertainty is estimated for the example run that is shown, but here, too, the estimated second-order distributions deviate substantially from the reference. However, in accordance with Theorem~\ref{thm:exprisk1}, we would like to emphasize that the parameters of the second-order distribution are unidentifiable for inner loss minimization. For a different run of the optimization algorithm on the same training data, we observed that the results may look very different.

In order to analytically evaluate the distance between the reference distribution and the second-order distributions, the empirical Wasserstein-$1$ distance was calculated per instance and for different $\lambda$ values. Figure \ref{fig: distance analysis} shows the results: Both in regions with and without training data, a relatively high distance to the reference distribution is observed. Using different orders of magnitude for $\lambda$ does not seem to significantly change that pattern.

To analyze some of these observations more deeply, we perform a second experiment in which we train the models with more epochs (up to 15\,000) and 40 runs on the same training dataset. The variation between different runs comes from the uniform initialization of the network parameters. In Figure~\ref{fig:parameter analysis} the estimated second-order parameters $\hat{\alpha}$ and $\hat{\beta}$ as a function of the number of epochs for different runs are displayed. For outer loss minimization, one can see that no convergence occurred after 15\,000 epochs. The estimates of $\alpha$ and $\beta$, averaged over the instance space, keep growing in magnitude, which is in accordance with Theorem~\ref{thm:exprisk2}. When regularization is added, $\alpha$ and $\beta$ grow in a much smaller rate, thereby also solving the convergence issues, as expected due to Theorem~\ref{thm:regrisk}. For inner loss minimization, one can observe large standard deviations in the estimates for $\alpha$ and $\beta$ when the Adam optimizer is running multiple times on the same training dataset. This is expected because of Theorem~\ref{thm:exprisk1}. Adding some regularization substantially decreases this variation.  

\section{Conclusion}
In this paper, we proposed the comparison with a reference distribution as an objective procedure to assess whether evidential deep learning methods represent epistemic uncertainty in a faithful manner. Using this procedure, we showed that epistemic uncertainty is in general not faithfully represented. More specifically, the regularizers that characterize evidential deep learning methods are mainly needed to stabilize the corresponding optimization problems, thereby yielding an "uncertainty budget" that cannot be exceeded. Conversely, in first-order risk minimization, regularization is mainly used to prevent overfitting by controlling the bias-variance trade-off. The resulting measures of epistemic uncertainty cannot be interpreted in a quantitative manner. In many downstream tasks, such as out-of-distribution detection, only relative interpretations of epistemic uncertainty are needed. This probably explains the strong performance of evidential deep learning methods on such tasks. We hypothesize that the OOD scores reported by evidential deep learning methods could be interpreted as density estimates in feature space. This observation was made by \citet{Grathwohl2020,Charpentier2022NaturalPN}, but we believe that more research is needed to draw final conclusions of that kind.

\newpage
\section*{Impact statement}
Machine learning systems are increasingly prevalent in high-stake application domains with a direct impact on humans, such as medical diagnostics, legal decisions, self-driving cars, fraud detection, natural disaster forecasting, etc. The profound consequences of inaccurate predictions in these areas emphasize the necessity of assessing the certainty of machine-learned forecasts. The current upswing in research on uncertainty quantification in the machine learning community underscores the recognition of this crucial aspect.

Taking center stage is Evidential Deep Learning (EDL), a method gaining prominence in safety-critical domains. However, despite its growing utilization, the operational principles of EDL remain less than fully understood. Rather than solely focusing on developing superior techniques for specific datasets, we advocate for a meticulous examination of existing methods. Unraveling their limitations is not just an academic exercise, but it serves as a practical guide for decision-makers. Understanding when to employ these methods and when to exercise caution is paramount in navigating the intricate terrain of high-stakes predictions.

\section*{Acknowledgements}
M.J. and W.W. received funding from the Flemish Government under the “Onderzoeksprogramma Artifici\"ele Intelligentie (AI) Vlaanderen” programme.
\bibliographystyle{unsrtnat}
\bibliography{ref}

\onecolumn
\appendix
\section{Basic properties of exponential family members}
\label{sec:basic_properties}
In this appendix we review some important formulas for exponential family members. 

\subsection*{Dirichlet distribution}
Let $\vec{\theta} = (\theta_1,...,\theta_K)$ be the parameters of the Categorical distribution and let $\vec{m}  = (m_1,\ldots,m_K) \in \mathbb{R}_+^K $ be the parameters of the Dirichlet distribution with probability density function:
\begin{equation}
    \label{eq:dir}
    p(\vec{\theta} \,|\, \vec{m}) = \frac{1}{B(\vec{m})} \prod_{k=1}^K \theta_k^{m_k-1}
\end{equation}
with $B(\vec{m})$ the Beta function, $
    \vec{\theta} = (\theta_1, \ldots , \theta_K) \in \triangle^K$ and $\triangle^K$ the probability $(K-1)$-simplex. 

The entropy of the Dirichlet distribution is given by:
$$\mathrm{H}(p(\vec{\theta} \,|\, \vec{m})) = \log B(\vec{m}) +(m_0-K)\psi(m_0) - \sum_{k=1}^K (m_k-K)\psi(m_k) \,,$$
with $\psi$ the digamma function and $m_0 = \sum_{k=1}^K m_k$. 

\subsection*{Normal-inverse Gamma distribution}
Let $\vec{\theta} = (\mu,\sigma^2)$ be the parameters of the Gaussian distribution and let $\vec{m} = (\gamma,\nu,\alpha,\beta) \in \mathbb{R} \times \mathbb{R}_+ \times [1,\infty) \times \mathbb{R}_+$ be the parameters of the normal-inverse-Gamma (NIG) distribution with probability density function:
\begin{equation}
    \label{eq:der}
    p(\mu,\sigma^2 \,|\, \gamma, \nu, \alpha, \beta) = \frac{\sqrt{\nu}}{\sqrt{2 \pi \sigma^2}} \frac{\beta^{\alpha}}{\Gamma(\alpha)} \left( \frac{1}{\sigma^2} \right)^{(\alpha + 1)} \exp \left\{ -\frac{2 \beta + \nu (\mu - \gamma)^2}{2 \sigma^2} \right\} \,,
\end{equation}
The entropy of the normal-inverse-Gamma distribution is given by:
$$\mathrm{H}(p(\mu,\sigma^2 \,|\, \gamma, \nu, \alpha, \beta)) = \frac{1}{2} + \log \left( \sqrt{2 \pi} \beta^{\frac{3}{2}} \Gamma(\alpha)\right) - \frac{\log \nu}{2}  +\alpha - \left(\alpha + \frac{3}{2} \right) \psi(\alpha) \,,$$
with $\Gamma$ the Gamma function. 	 
		
\subsection*{Gamma distribution}
Let $\theta$ be the parameter of a Poisson distribution and let $\vec{m}=(\alpha,\beta) \in \mathbb{R}_+^2$ be the shape and rate parameters of the Gamma distribution with probability density function:
$$p(\theta \,|\, \alpha,\beta) = \frac{\beta^{\alpha}}{\Gamma(\alpha)} \theta^{\alpha-1} e^{-\beta \theta } \,. $$
The entropy of the Gamma distribution is given by: 
$$\mathrm{H}(p(\theta \,|\, \alpha,\beta)) 
= \alpha - \ln \beta + \ln \Gamma(\alpha) + (1-\alpha) \psi(\alpha) \,.$$

\section{Proof of Theorem~\ref{thm:exprisk1}}
\label{sec:proofinject}
In this proof we derive 
\begin{eqnarray}
    \label{eq:intBis}
    p(y \,|\, \vec{m}(\vec{x})) &=& \mathbb{E}_{\vec{\theta} \sim p(\vec{\theta} \,|\, \vec{m}(\vec{x}; \vec{\Phi}))} \, p(y \,|\, \vec{\theta})
		= \int \! p(y \,|\, \vec{\theta}) \, p(\vec{\theta} \,|\, \vec{m}(\vec{x})) \, d \vec{\theta}
\end{eqnarray}  
for different members of the exponential family. 

When Dirichlet distributions are chosen for $\mathcal{H}_2$, the predictive distribution is given by\footnote{For this derivation we were inspired by https://stephentu.github.io/writeups/dirichlet-conjugate-prior.pdf}: 
\begin{eqnarray*}
    \label{eq:preddiri}
    p(y =k \,|\, \vec{m} = (m_1,...,m_k)) 
    &=& \int_{\triangle^K} \! p(y =k \,|\, \vec{\theta}) \, p(\vec{\theta} \,|\, \vec{m}) \, d \vec{\theta}\\
    &=& \int_{\triangle^K} \! \frac{\Gamma(\sum_{j=1}^K m_j)}{\prod_{j=1}^K \Gamma(m_j)} \prod_{j=1}^K \theta_j^{m_j-1} \theta_k \, d \vec{\theta}\\
    &=& \frac{\Gamma(\sum_{j=1}^K m_j)}{\prod_{j=1}^K \Gamma(m_j)} \int_{\triangle^K} \! \prod_{j=1}^K \theta_j^{m_j-1} \theta_k \, d \vec{\theta} \\  
     &=& \frac{\Gamma(\sum_{j=1}^K m_j)}{\prod_{j=1}^K \Gamma(m_j)} \int_{\triangle^K} \! \mathrm{d} \vec{\theta} \,   \prod_{j=1}^K \theta_j^{\mathbb{I}(j=k) + m_j-1}  \\  
       &=& \frac{\Gamma(\sum_{j=1}^K m_j)}{\prod_{j=1}^K \Gamma(m_j)} \frac{\prod_{j=1}^K \Gamma(\mathbb{I}(j=k) + m_j)}{\Gamma(1+ \sum_{j=1}^K m_j)}   \\  
       &=& \frac{m_k}{\sum_{j=1}^K m_j} = \vartheta_k \,.
\end{eqnarray*}
This is the probability mass function of the Categorical distribution with parameters $\vartheta_1,\vartheta_2,...,\vartheta_k$. In this case $\mathcal{H}_J$ does not contain injective $\mathcal{M} \rightarrow \mathbb{P} ( \mathcal{Y})$ operators, and $\mathcal{C}_1 = \mathcal{C}_J$ because the two hypothesis spaces contain Categorical distributions. 
		
When normal-inverse Gamma distributions are chosen for $\mathcal{H}_2$, the predictive distribution is derived by \citet{amini2020deep}:
\begin{eqnarray*}
    \label{eq:preddiri}
    p(y \,|\, \vec{m} = (\gamma,\nu,\alpha,\beta)) 
    &=& \int \! p(y \,|\, \vec{\theta}) \, p(\vec{\theta} \,|\, \vec{m}) \, d \vec{\theta} \\
    &=& \int_{\sigma^2 = 0}^{\infty} \int_{\mu=-\infty}^{\infty} p(y \,|\, \mu, \sigma^2) \, p(\mu,\sigma^2 \,|\, \gamma,\nu,\alpha,\beta) \, d \sigma^2 \, d \mu \\ 
    &=& \cdots \\
    &=& \frac{\Gamma(1/2 + \alpha)}{\Gamma(\alpha)} \sqrt{\frac{\nu}{\pi}} (2 \beta (1+\nu))^\alpha (\nu (y-\gamma^2) + 2\beta (1+\nu))^{(-\frac{1}{2} + \alpha)} \\
    &=& \operatorname{St}\!\left( y; \, \gamma,\frac{\beta (1+\nu)}{\nu \alpha}, 2 \alpha \right) \,,
\end{eqnarray*}
where $\operatorname{St}(y; \mu_\mathrm{St},\sigma^2_\mathrm{St}, \nu_\mathrm{St})$ stands for Student's-$t$ distribution with location $\mu_\mathrm{St}$, scale $\sigma^2_\mathrm{St}$ and $\nu_\mathrm{St}$ degrees of freedom. $\mathcal{H}_J$ does not contain injective $\mathcal{M} \rightarrow \mathbb{P} ( \mathcal{Y})$ operators, and $\mathcal{C}_1 \subset  \mathcal{C}_J$, because the Gaussian distribution is a special case of the Student's-$t$ distribution. 

When Gamma distributions are chosen for $\mathcal{H}_2$, the predictive distribution is given by\footnote{For this derivation we were inspired by https://people.stat.sc.edu/Hitchcock/stat535slidesday18.pdf}:
\begin{eqnarray*}
    \label{eq:predGamma}
    p(y =k \,|\, \vec{m} = (\alpha,\beta)) 
    &=& \int \! p(y =k \,|\, \theta) \, p(\theta \,|\, \vec{m}) \, d \theta \\
    &=& \int \! \left[ \frac{\theta^k e^{-\theta}}{k!} \right] \left[ \frac{\beta^{\alpha}}{\Gamma(\alpha)} \theta^{\alpha-1} e^{-\beta\theta}\right] \, d \theta \\
     &=& \frac{\beta^{\alpha}}{ \Gamma(k+1) \, \Gamma(\alpha)}\int \! \theta^{\alpha-1 +k} e^{-(\beta+1)\theta} \, d \theta\\
     &=& \frac{\beta^{\alpha}}{ \Gamma(k+1) \, \Gamma(\alpha)}  \frac{\Gamma(k+\alpha)}{(\beta+1)^{k+\alpha}} \\
     &=& \frac{ \Gamma(k+\alpha)}{ \Gamma(k+1) \, \Gamma(\alpha)}  \left[\frac{\beta}{\beta+1}\right]^{\alpha} \left[ \frac{1}{\beta+1} \right]^k  \\
     &=& \binom{k+\alpha-1}{k}  \left[\frac{\beta}{\beta+1}\right]^{\alpha} \left[ \frac{1}{\beta+1} \right]^k \,.  
\end{eqnarray*}
This is the probability mass function of a negative binomial distribution with number of successes $r=\alpha$ and success probability $p= \beta / (\beta+1)$. 
The p.m.f.\ is an injective $\mathcal{M} \rightarrow \mathbb{P} ( \mathcal{Y})$ operator, and $\mathcal{C}_1 \subset  \mathcal{C}_J$ because the Poisson distribution is a special case of the negative binomial distribution.

\section{Additional theoretical results for inner loss minimization}
\label{sec:convexity}
In this appendix we further analyze the convexity properties of inner loss minimization without regularization, using the negative log-likelihood as first-order loss function. Let us first recall the standard definition of convexity.
\begin{definition}
    A function $f: \mathbb{R}^D \rightarrow \mathbb{R}$ is convex if 
    \begin{equation}
        \label{eq:convex}
        f(\rho \vec{x} + (1-\rho) \vec{z}) \leq \rho f(\vec{x}) + (1-\rho) f(\vec{z}) 
    \end{equation}
    for all $\vec{x}, \vec{z}$ in the domain of $f$ and all $\rho \in [0,1]$.
    Furthermore, if \eqref{eq:convex} holds as a strict inequality for all $\vec{x}, \vec{z}$ in the domain of $f$ and all $\rho \in [0,1]$, then $f$ is called strictly convex. 
\end{definition}

The following theorem presents formal results for inner expectation maximization using linear models as underlying model class for the neural network. 

\begin{theorem}
    \label{thm:exprisk1bis}
    The following statements hold concerning the convexity of the risk minimization problem in \eqref{eq:exprisk1}. 
    \begin{itemize}
        \item Let $\mathcal{H}_2$ consist of Dirichlet distributions, let $L_1$ be the negative log-likelihood, and let $\vec{\Psi}$ be a linear transformation, then \eqref{eq:exprisk1} is convex.
        \item Let $\mathcal{H}_2$ consist of normal-inverse Gamma distributions, let $L_1$ be the negative log-likelihood, and let $\vec{\Psi}$ be a linear transformation, then \eqref{eq:exprisk1} is not convex.
    \end{itemize}
\end{theorem}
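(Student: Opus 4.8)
The plan is to reduce both parts to the closed-form predictive distributions that were already computed in the proof of Theorem~\ref{thm:exprisk1} (Appendix~\ref{sec:proofinject}), and then to study the negative log-likelihood as a function of the natural parameters. The key structural observation is that when $\vec{\Psi}$ is a fixed linear feature map, each quantity $\vec{w}_k^\top \vec{\Psi}(\vec{x})$ feeding the output neurons is an \emph{affine} function of the learnable parameters $\vec{\Phi}$. Since affine reparameterizations preserve convexity, it suffices to understand convexity of the per-example loss as a function of these linearly parameterized natural parameters; summation over $i$ then preserves convexity in the first part, while for the second part a single well-chosen data point and a single line in parameter space will suffice to exhibit non-convexity.

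For the Dirichlet case, I would use the fact that the predictive distribution is Categorical with $\vartheta_k = m_k/\sum_j m_j = \exp(\eta_k)/\sum_j \exp(\eta_j)$, where $\eta_k = \vec{w}_k^\top \vec{\Psi}(\vec{x})$ (Appendix~\ref{sec:proofinject}). The negative log-likelihood of an observed class $y$ is then $-\eta_y + \log \sum_j \exp(\eta_j)$. The first summand is affine in $\vec{\Phi}$; the second is the log-sum-exp function, which is convex, composed with an affine map of $\vec{\Phi}$, hence convex. A finite sum of convex functions is convex, so \eqref{eq:exprisk1} is convex. This is exactly the classical argument for multinomial logistic regression, and for $K=2$ it specializes to the convexity of the logistic loss with $\vartheta = \sigma\big((\vec{w}_1 - \vec{w}_2)^\top \vec{\Psi}(\vec{x})\big)$, matching the Beta setup used in the experiments.

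For the NIG case I would prove non-convexity by exhibiting a single training example and restricting to a line. By Appendix~\ref{sec:proofinject} the predictive distribution is Student's-$t$, $\operatorname{St}\!\big(y;\gamma,\tfrac{\beta(1+\nu)}{\nu\alpha},2\alpha\big)$, whose negative log-likelihood is, up to terms independent of the location, $(\tfrac{1}{2}+\alpha)\log\!\big(\nu(y-\gamma)^2 + 2\beta(1+\nu)\big)$. I would freeze $\vec{w}_2,\vec{w}_3,\vec{w}_4$ so that $\nu,\alpha,\beta$ are fixed positive constants, and vary only $\gamma = \vec{w}_1^\top\vec{\Psi}(\vec{x}_1)$. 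Writing $g(\gamma) = \log\!\big(\nu(\gamma-y_1)^2 + c\big)$ with $c = 2\beta(1+\nu) > 0$, a short computation gives $g''(\gamma) = 2\nu\big(c - \nu(\gamma - y_1)^2\big)/\big(\nu(\gamma-y_1)^2 + c\big)^2$, which is strictly negative whenever $\nu(\gamma - y_1)^2 > c$. Since the prefactor $\tfrac{1}{2}+\alpha$ is positive, the per-example loss is non-convex in $\gamma$, and because $\gamma$ is a linear functional of $\vec{w}_1$, choosing a direction $\vec{v}$ with $\vec{v}^\top\vec{\Psi}(\vec{x}_1)\neq 0$ yields a line in $\vec{\Phi}$-space along which \eqref{eq:convex} fails; hence \eqref{eq:exprisk1} is not convex.

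The first part is essentially routine. For the second part, the main obstacle is that the full Student's-$t$ log-likelihood is an unwieldy function of all four parameters, involving $\log\Gamma$ terms, so a direct Hessian argument over the entire parameter space would be messy. The crucial simplification I would rely on is isolating the location parameter: the quadratic-inside-logarithm structure — precisely the feature that distinguishes the heavy-tailed Student's-$t$ predictive from the Gaussian first-order model, whose location enters convexly — is what forces the sign change in $g''$. The only point requiring care is ensuring this one-dimensional non-convexity genuinely transfers to $\vec{\Phi}$, which is guaranteed as soon as $\vec{\Psi}(\vec{x}_1)\neq\vec{0}$ so that the map $\vec{w}_1 \mapsto \gamma$ is surjective onto $\mathbb{R}$ along the chosen line.
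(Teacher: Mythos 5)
Your proof is correct; the first bullet is essentially identical to the paper's own proof (reduction to multinomial logistic regression via the predictive distribution of Appendix~\ref{sec:proofinject}, then convexity of log-sum-exp composed with an affine map of the weights), but your second bullet takes a genuinely different route. The paper attacks the \emph{scale} parameter: it picks two points $(\nu,\beta)\neq(\nu',\beta')$ sharing the same Student's-$t$ scale $\sigma^2_\mathrm{St}=\beta(1+\nu)/(\nu\alpha)$, places them at the likelihood-maximizing scale $\hat\sigma^2_\mathrm{St}=(y-\gamma)^2$, and uses the fact that the exponential (or softplus) output activations make $\sigma^2_\mathrm{St}$ non-affine along the segment joining them, so the loss at an interior point strictly exceeds the common endpoint value, violating \eqref{eq:convex}. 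You instead attack the \emph{location} parameter: freezing $(\nu,\alpha,\beta)$, the per-example loss is $\left(\alpha+\tfrac{1}{2}\right)\log\!\big(\nu(y-\gamma)^2+2\beta(1+\nu)\big)$ up to additive constants, and your second-derivative computation (which checks out) shows it is strictly concave in $\gamma$ once $\nu(\gamma-y)^2>2\beta(1+\nu)$; since $\gamma$ has a linear activation, this is an affine section of the objective in $\vec{\Phi}$, and a function is convex iff its restriction to every line is convex. Your route is more elementary and also more robust: the paper's construction explicitly degenerates when the scale is an affine function of the parameters (the proof itself notes that then ``both sides are equal''), whereas your argument shows non-convexity persists even in that case, because it stems from the heavy tails of the Student's-$t$ predictive rather than from the choice of activation functions. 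What the paper's route buys in exchange is a thematic link: it traces the failure of convexity back to the non-identifiability of $(\nu,\beta)$, echoing Theorem~\ref{thm:exprisk1}. One caveat you share with the paper (so it is not a gap relative to it, but worth one added sentence): both proofs exhibit non-convexity of a single summand of \eqref{eq:exprisk1}; for an arbitrary dataset one can close this by noting that along your chosen line the full sum grows only logarithmically in both directions, and a convex function with sublinear growth on a line must be constant there, which the risk is not.
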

\begin{proof}
    Let us start with proving the first statement.
    With Dirichlet distributions for $\mathcal{H}_2$ and $\vec{\Psi}$ a linear transformation, \eqref{eq:int} becomes
    \begin{equation}
        \label{eq:logreg}
        p(y=k \,|\, \vec{m}) = \frac{\mathrm{e}^{\vec{w}_k^\top \vec{\Psi}(\vec{x})}}{\sum_{j=1}^K \mathrm{e}^{\vec{w}_j^\top \vec{\Psi}(\vec{x})}} \,.
    \end{equation}
    This is nothing more than the class of functions considered in multinomial logistic regression.
    The per-instance risk becomes
    \begin{equation*}
        L_1(y, p(y \,|\, \vec{m}(\vec{x}_i))) = - \log \left(\frac{\mathrm{e}^{\vec{w}_y^\top \vec{\Psi}(\vec{x})}}{\sum_{j=1}^K \mathrm{e}^{\vec{w}_j^\top \vec{\Psi}(\vec{x})}} \right) = \log \left(\sum_{j=1}^K \mathrm{e}^{\vec{w}_j^\top \vec{\Psi}(\vec{x})}\right) - \vec{w}_y^\top \vec{\Psi}(\vec{x}) \,.
    \end{equation*}
    A function of type log-sum-exponential is a convex function~\citep{Boyd2004}.
    The second part of the risk is linear, so the two components together are convex.

    Let us now prove the second statement by evaluating~\eqref{eq:convex} for
    \begin{equation*}
        f(\gamma, \nu, \alpha, \beta) = -\log \operatorname{St}(y; \gamma, \sigma^2_\mathrm{St}(\nu, \alpha, \beta), 2 \alpha).
    \end{equation*}
    Since $\exists \,\{(\nu, \beta), (\nu', \beta')\}$ with $(\nu, \beta) \neq (\nu', \beta')$ and $\sigma^2_\mathrm{St}(\nu, \alpha, \beta) = \sigma^2_\mathrm{St}(\nu', \alpha, \beta')$ $\forall \alpha$ for these points the r.h.s.\ of~\eqref{eq:convex} can be written as:
    \begin{equation*}
        \rho f(\gamma, \nu, \alpha, \beta) + (1 - \rho) f(\gamma, \nu', \alpha, \beta') = -\log \operatorname{St}(y; \gamma, \sigma^2_\mathrm{St}(\nu, \alpha, \beta), 2 \alpha)\,.
    \end{equation*}
    Next, we note that $\hat\sigma^2_\mathrm{St} = \operatorname{arg} \max_{\sigma^2_\mathrm{St}} \operatorname{St}(y; \mu_\mathrm{St}, \sigma^2_\mathrm{St}, \nu_\mathrm{St}) = (x - \mu_\mathrm{St})^2$.
    Let $\hat\sigma^2_\mathrm{St} = \sigma^2_\mathrm{St}(\hat\nu, \hat\alpha, \hat\beta) = \sigma^2_\mathrm{St}(\hat\nu', \hat\alpha, \hat\beta')$,
    then the l.h.s.\ of~\eqref{eq:convex} is always greater or equal than the r.h.s.\ for these points,
    \begin{equation*}
        -\log \operatorname{St}\!\left( y; \gamma, \sigma^2_\mathrm{St}(\rho \hat\nu + (1 - \rho) \hat\nu', \hat\alpha, \rho \hat\beta + (1 - \rho) \hat\beta'), 2 \hat\alpha \right) \ge -\log \operatorname{St}\!\left( y; \gamma, \hat\sigma^2_\mathrm{St}, 2 \hat\alpha \right).
    \end{equation*}
    If $\sigma^2_\mathrm{St}$ is an affine function both sides are equal. However, if $\vec{\Psi}$ is a linear transformation and $(\gamma, \nu, \alpha, \beta)$ are defined as output neurons with linear and exponential activations (see Section~2) or their related formulations with the softplus function, the inequality becomes strict.
    Since~\eqref{eq:convex} has to hold for any points in the domain of $f$, we conclude that $-\log \operatorname{St}$ is not convex if $\vec{\Psi}$ is a linear transformation.
\end{proof}
The proof highlighted that the predictive distribution corresponds to a traditional multinomial logistic regression model when Dirichlet distributions are considered as model class, linear transformations as feature learning layers, and the negative log-likelihood as $L_1$.
It is well known that risk minimization in multinomial logistic regression is not strictly convex\footnote{http://deeplearning.stanford.edu/tutorial/supervised/SoftmaxRegression/}.
Concretely, subtracting a fixed vector $\vec{w}_0$ from each $\vec{w}_k$ does not affect \eqref{eq:logreg}:
\begin{equation*}
    \label{eq:logreg2}
    p(y=k \,|\, \vec{m}) = \frac{\mathrm{e}^{(\vec{w}_k-\vec{w}_0)^\top \vec{x}}}{\sum_{j=1}^K \mathrm{e}^{(\vec{w}_j-\vec{w}_0)^\top \vec{x}}} = \frac{\mathrm{e}^{\vec{w}_k^\top \vec{x}} \mathrm{e}^{-\vec{w}_0^\top \vec{x}}}{\sum_{j=1}^K \mathrm{e}^{\vec{w}_j^\top \vec{x}} \mathrm{e}^{-\vec{w}_0^\top \vec{x}}} = \frac{\mathrm{e}^{\vec{w}_k^\top \vec{x}}}{\sum_{j=1}^K \mathrm{e}^{\vec{w}_j^\top \vec{x}}} \,.
\end{equation*}
In other words, subtracting $\vec{w}_0$ from every $\vec{w}_k$ does not affect our hypothesis space at all.
The hypothesis space is overparameterized, in the sense that there are multiple parameter settings that give rise to exactly the same hypothesis function, which maps $\vec{x}$ to the predictions.
If the risk is minimized for some values of $\{\vec{w}_1,\ldots,\vec{w}_k\}$, then it is also minimized for $\{\vec{w}_1 - \vec{w}_0,\ldots,\vec{w}_k-\vec{w}_0\}$.
Thus, the risk minimizer is not unique. 
However, \eqref{eq:exprisk1} is still convex, and thus gradient descent will not run into local optima problems.
But the Hessian is non-invertible, which causes a straightforward implementation of Newton’s method to run into numerical problems. 

In multinomial logistic regression, this overparameterization is avoided by eliminating one of the vectors $\vec{w}_k$ (typically $\vec{w}_K$, but any $k$ could be chosen).
One then optimizes over the remaining $K-1$ parameter vectors.
However, for the setup considered in this paper, such a simplification is not possible, because it would fix epistemic uncertainty to a predefined level. 

\section{Closed-form outer loss minimization for exponential family members}
\label{sec:closedformouter}
In this appendix we show how the empirical risk in outer loss minimization can be simplified. For exponential family members, the log-likelihood can be written as 
\begin{equation*}
    \log p(y \,|\, \vec{\theta}) = \log h(y) + \vec{\theta}^\top \vec{u}(y) - A(\vec{\theta}) \,,
\end{equation*}
where $\vec{\theta} \in \mathbb{R}^L$ is the natural parameter,  $h: \mathbb{R} \to \mathbb{R}$ is the carrier or base measure, $A: \mathbb{R}^L \to \mathbb{R}$ the log-normalizer and $u : \mathbb{R} \to \mathbb{R}^L$ 
the sufficient statistics.

Its expectation under the prior distribution then becomes
\begin{equation*}
    \mathbb{E}_{\vec{\theta} \sim p(\vec{\theta} \,|\, \vec{m})} \log p(y \,|\, \vec{\theta}) = \log h(y) + \mathbb{E}_{\vec{\theta} \sim p(\vec{\theta} \,|\, \vec{m})} \!\left[ \vec{\theta}^\top \vec{u}(y) \right] - \mathbb{E}_{\vec{\theta} \sim p(\vec{\theta} \,|\, \vec{m})} A(\vec{\theta}) \,,
\end{equation*}
with $\mathbb{E}_{\vec{\theta} \sim p(\vec{\theta} \,|\, \vec{m})} \vec{\theta} = \vec{r}$. 

Thus, when $\mathcal{H}_2$ contains Dirichlet distributions, the expected log-likelihood function becomes 
\begin{align}
    \label{eq:outerclas}
    \mathbb{E}_{\vec{\theta} \sim p(\vec{\theta} \,|\, \vec{m})}\log p(y \,|\, \vec{\theta})
    &= 0 + \mathbb{E}_{\vec{\theta} \sim \operatorname{Dir}(\vec{m}) } \left[ \sum_{i=1}^K 1_{ [y=i ]} \log(\theta_i) \right] - 0 \nonumber \\
    &= \sum_{i=1}^K \mathbb{I}_{ [y=i ]} \, \mathbb{E}_{\vec{\theta} \sim \operatorname{Dir}(\vec{m}) } \left[ \log(\theta_i) \right] \nonumber \\
    &= \sum_{i=1}^K \mathbb{I}_{ [y=i ]} \left( \psi(m_i) - \psi \!\left( \sum_{j=1}^K m_j \right) \right), \nonumber \\
    &= \psi(m_y) - \psi\!\left( \sum_{j=1}^K m_j \right),
\end{align}
with $\mathbb{I}$ the indicator function, which returns one when its argument is true and zero otherwise. We used for the last inequality that the logarithmic moments\footnote{https://en.wikipedia.org/wiki/Dirichlet\_distribution\#Entropy} of a Dirichlet distribution are $\psi(m_i) - \psi(\sum_{j=1}^K m_j)$.
In a similar vain, \citet{Charpentier2022NaturalPN} derived that the expected log-likelihood is given by 
\begin{equation}
    \label{eq:outerreg}
    \mathbb{E}_{\vec{\theta} \sim p(\vec{\theta} \,|\, \vec{m})}\log p(y \,|\, \vec{\theta}) = \frac{1}{2} \left( -\frac{\alpha}{\beta} (y-\gamma)^2 -\frac{1}{\nu} +\psi(\alpha) -\log \beta -\log 2 \pi \right) 
\end{equation}
when $\mathcal{H}_2$ contains inverse-normal Gamma distributions. 

For Gamma distributions \citet{Charpentier2022NaturalPN} find 
\begin{equation*}
    \mathbb{E}_{\vec{\theta} \sim p(\vec{\theta} \,|\, \vec{m})}\log p(y \,|\, \vec{\theta}) = (\psi(\alpha) - \log \beta)y -\frac{\alpha}{\beta} - \sum_{k=1}^y \log k \,.
\end{equation*}


\begin{figure}
    \centering
\begin{subfigure}{\textwidth}
\centering
    \includegraphics[trim = 2 8 2 0, width = .9\textwidth, height=.27\textheight]{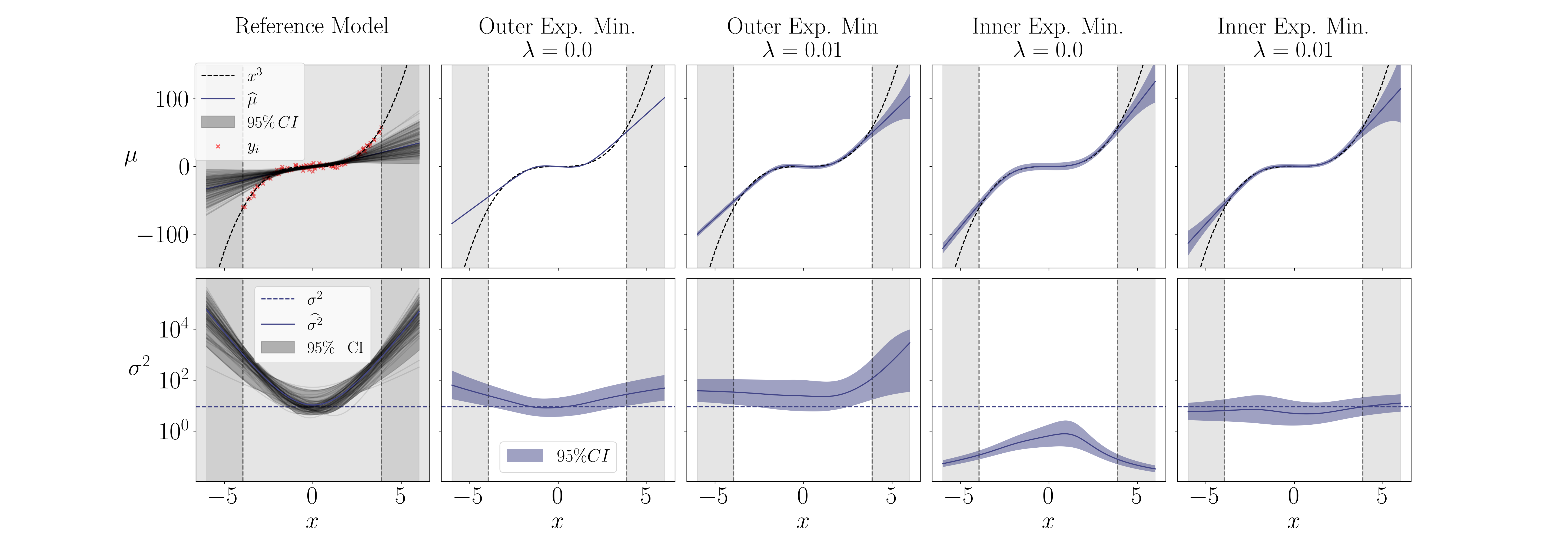}
\caption{$N=100$}
\end{subfigure}
\hfill
\begin{subfigure}{\textwidth}
\centering
    \includegraphics[trim = 2 8 2 0,width = .9\textwidth, height=.27\textheight]{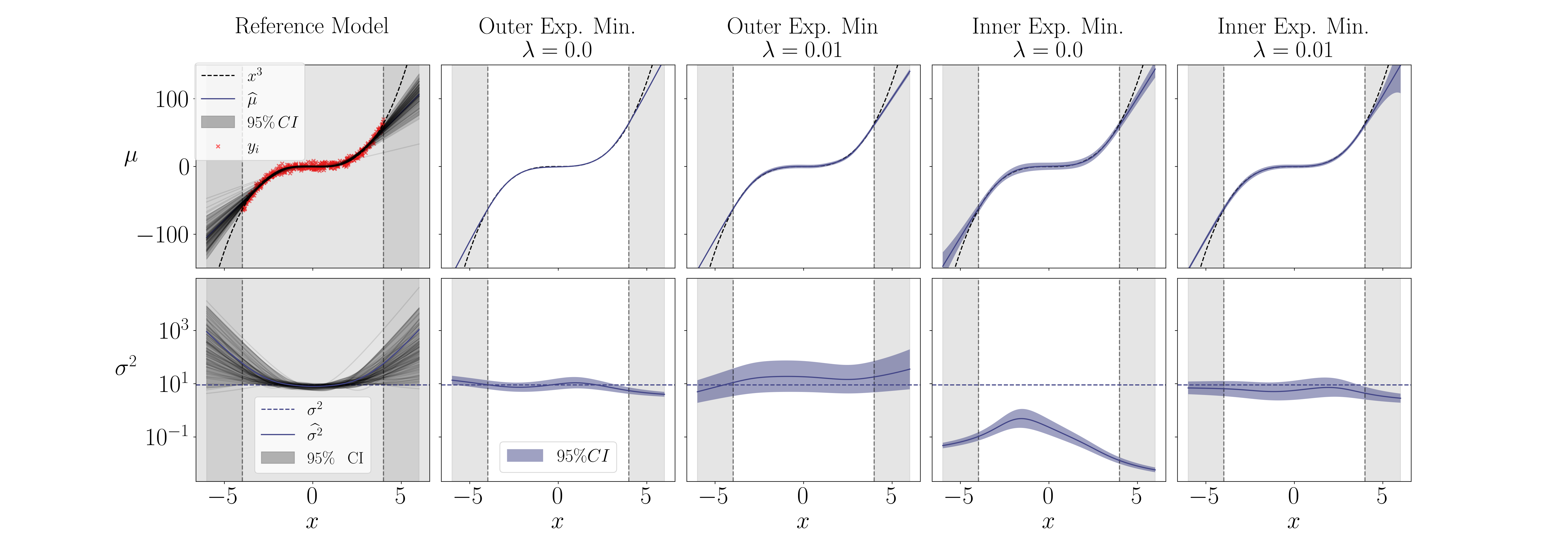}
\caption{$N=500$}
\end{subfigure}
\hfill
\begin{subfigure}{\textwidth}
\centering
\includegraphics[trim = 2 2 2 0,width=.9\textwidth, height=.27\textheight]{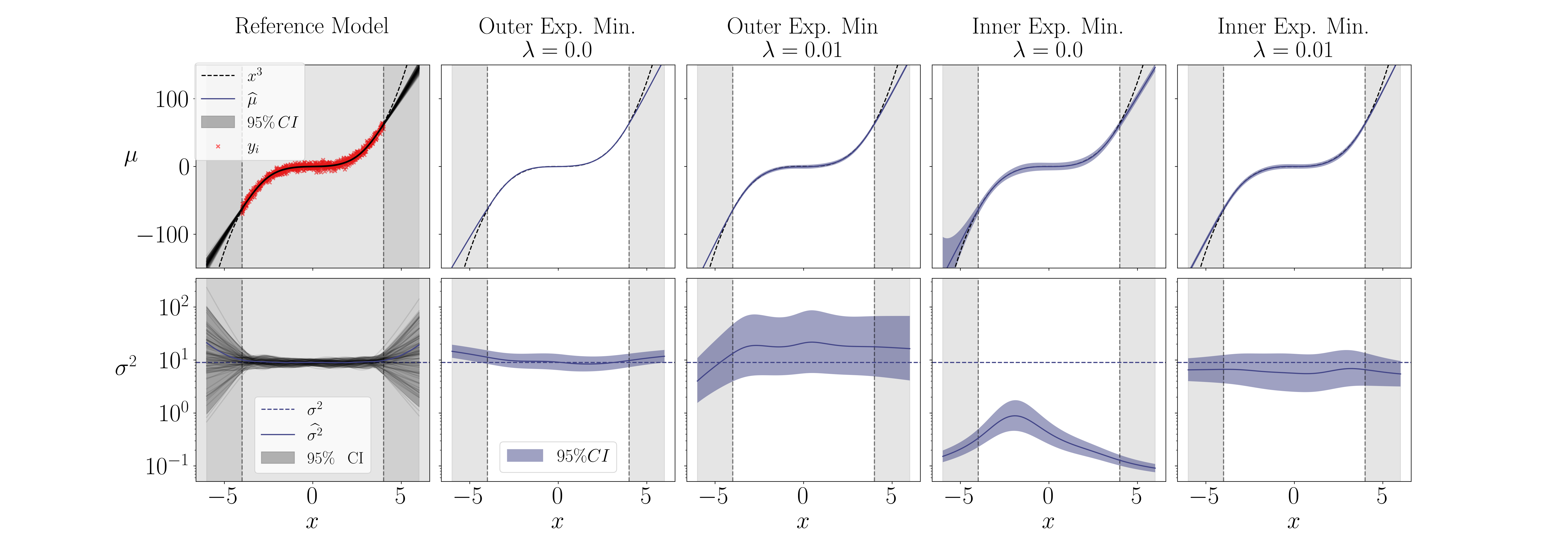}
\caption{$N=1000$}
\end{subfigure}
\caption{Regression experiments for training sample size $N \in \{100, 500, 1000\}$. The reference model learns the parameters $\theta=(\mu, \sigma)$ of the underlying normal distribution. The true underlying mean $\mu = x^3$ and variance $\sigma^2=9$, are visualised in separate rows together with the mean predictions (in blue) and obtained confidence intervals.  Confidence bounds for the predicted parameters by the reference model are obtained by resampling the training data $100$ times. Confidence bounds for $\mu$ and $\sigma^2$ learned by second-order risk minimization (visualized in purple) are obtained by the quantiles of the normal distribution $\mathcal{N}(\widehat{\mu}, \frac{\widehat{\beta}}{(\widehat{\alpha} -1)\widehat{\nu}})$, and of the Inverse-Gamma distribution $\Gamma^{-1}(\widehat{\alpha}, \widehat{\beta})$, respectively. The red dots denote one training dataset. See main text for experimental setup.}
\label{fig: regression conf bounds}
\end{figure}

\section{Proof of Theorem 3.2}
\label{sec:proofouter}
    The empirical risk is given by 
    \begin{equation*}
        \mathcal{R}(\vec{\Phi}) = \sum_{i=1}^N \mathbb{E}_{\vec{\theta} \sim p(\vec{\theta} \,|\, \vec{m}(\vec{x}_i; \vec{\Phi}))} L_1(y_i, p(y \,|\, \vec{\theta})) \,. 
    \end{equation*}
    When the first-order loss $L_1$ is convex, it follows from Jensen's inequality that 
    \begin{align*} 
        \mathcal{R}(\vec{\Phi}) \geq \sum_{i=1}^N L_1(y_i, \mathbb{E}_{\vec{\theta} \sim p(\vec{\theta} \,|\, \vec{m}(\vec{x}_i; \Phi))} p(y \,|\, \vec{\theta})) \,. 
    \end{align*}
    Let $\widetilde{\vec{\Phi}}$ be the minimizer of the right-hand side, then it holds for all $\vec{\Phi}$ that 
    \begin{align} 
        \label{eq:doubleineq}
                \mathcal{R}(\vec{\Phi}) 
                &\geq \sum_{i=1}^N L_1(y_i, \mathbb{E}_{\vec{\theta} \sim p(\vec{\theta} \,|\, \vec{m}(\vec{x}_i; \Phi))} p(y \,|\, \vec{\theta}))\\ 
                &\geq \sum_{i=1}^N L_1(y_i, \mathbb{E}_{\vec{\theta} \sim p(\vec{\theta} \,|\, \vec{m}(\vec{x}_i; \tilde{\vec{\Phi}}))} p(y \,|\, \vec{\theta})) \,. 
    \end{align}
    In what follows let
    \begin{equation*}
        \tilde{\vec{\theta}}(\vec{x}_i) = \mathbb{E}_{\vec{\theta} \sim p(\vec{\theta} \,|\, \vec{m}(\vec{x}_i; \widetilde \Phi))} \vec{\theta} \,.
    \end{equation*}
    For all $i \in \{1,\ldots,N\}$, let $\vec{\Phi}'$ denote the parameterization that corresponds to $ p(\vec{\theta} \,|\, \vec{m}(\vec{x}_i;\vec{\Phi}')) = \delta(\tilde{\vec\theta}(\vec{x_i}))$, where $\delta$ is the Dirac delta function.
    Such a parameterization always exists, because we assumed a hypothesis space of universal approximators.
    Furthermore, note that 
    \begin{equation*}
        \mathbb{E}_{\vec{\theta} \sim p(\vec{\theta} \,|\, \vec{m}(\vec{x}_i;\vec{\Phi}'))} \vec{\theta} \,=\, \tilde{\vec{\theta}}(\vec{x}_i) \,.
    \end{equation*}
    Then, 
    \begin{align*}
        \mathcal{R}(\vec{\Phi}') 
        &= \sum\limits_{i=1}^N \mathbb{E}_{\vec{\theta} \sim p(\vec{\theta} \,|\, \vec{m}(\vec{x}_i;\vec{\Phi}'))} L_1\left(y_i, p(y \,|\, \vec{\theta})\right) \\
        &= \sum\limits_{i=1}^N L_1\left(y_i, \tilde{\vec\theta}(\vec{x}_i) \right) \\
        &= \sum\limits_{i=1}^N L_1\left(y_i, \mathbb{E}_{\vec{\theta} \sim p(\vec{\theta} \,|\, \vec{m}(\vec{x}_i;\tilde{\vec{\Phi}}))} p(y \,|\, \vec{\theta})\right).
    \end{align*}
    Combining this last observation with inequality \eqref{eq:doubleineq} lets us conclude that for all $\vec{\Phi}$ it holds that $\mathcal{R}(\vec{\Phi}) \geq \mathcal{R}(\vec{\Phi}')$.
    Thus, $\vec{\Phi}'$ must be a solution of \eqref{eq:exprisk2}. 

\section{Proof of Theorem 3.3}
\label{sec:proofregular}
The proof is based on the fact that  \eqref{eq:exprisk1reg} and \eqref{eq:exprisk2reg} can be reformulated as constrained optimization problems:
\begin{eqnarray*}
    \hat{\vec{\Phi}} &=& \operatorname{arg}\min_{\vec{\Phi}} \sum_{i=1}^N L_1 \!\left( y_i, \mathbb{E}_{\vec{\theta} \sim p(\vec{\theta} \,|\, \vec{m}(\vec{x}_i; \vec{\Phi}))} \!\left[ p(y \,|\, \vec{\theta})\right] \right) \\
		&& \qquad \mbox{s.t. }  R(\vec{\Phi}) \leq t \\
    \hat{\vec{\Phi}} &=& \operatorname{arg} \min_{\vec{\Phi}} \sum_{i=1}^N \mathbb{E}_{\vec{\theta} \sim p(\vec{\theta} \,|\, \vec{m}(\vec{x}_i;\vec{\Phi}))} \!\left[ L_1(y_i, p(y \,|\, \vec{\theta})) \right] \\ 
		&&\qquad \mbox{s.t. }  R(\vec{\Phi}) \leq t
\end{eqnarray*}
with $t$ being a bijection of $\lambda$. 

To prove this equivalence\footnote{For the proof we found inspiration here: https://math.stackexchange.com/questions/416099/lasso-constraint-form-equivalent-to-penalty-form}, let us introduce the shorthand notation $\mathcal{R}(\vec{\Phi})$ for the empirical risk. In essence we intend to show equivalence between the \emph{penalty problem}
$$\operatorname{arg}\min_{\vec{\Phi}} \mathcal{R}(\vec{\Phi}) + \lambda  R(\vec{\Phi}) \\$$
and the \emph{constrained problem}
$$
\operatorname{arg}\min_{\vec{\Phi}} \mathcal{R}(\vec{\Phi}) \qquad  \qquad \mbox{s.t. }  R(\vec{\Phi}) \leq t \, .
$$
First, assume that $\hat{\vec{\Phi}}$ is a solution of the penalty problem, which implies for all $\vec{\Phi}$ that
\begin{equation}
\label{eq:lossineq}
\mathcal{R}(\vec{\Phi}) + \lambda  R(\vec{\Phi}) \geq \mathcal{R}(\hat{\vec{\Phi}}) + \lambda  R(\hat{\vec{\Phi}}) \,. 
\end{equation}
Hence for all $\vec{\Phi}$ such that $R(\vec{\Phi}) \leq t = R(\hat{\vec{\Phi}}) $ it holds that $\mathcal{R}(\vec{\Phi}) \geq \mathcal{R}(\hat{\vec{\Phi}})$. Thus, $\hat{\vec{\Phi}}$ is also a solution of the constrained problem when $t = R(\hat{\vec{\Phi}}) $. 

Conversely, assume that $\vec{\Phi}^{**}$ is a solution of the constrained problem with $t = R(\hat{\vec{\Phi}})$, which implies 
$$ R(\vec{\Phi}^{**}) \leq t = R(\hat{\vec{\Phi}}) \qquad \mbox{and} \qquad \mathcal{R}(\vec{\Phi}^{**}) \leq \mathcal{R}(\hat{\vec{\Phi}}) \,.$$
Since $\hat{\vec{\Phi}}$ is a solution of the penalty problem, inequality \eqref{eq:lossineq} holds. Combining those two inequalities leads to 
$$\mathcal{R}(\vec{\Phi}) + \lambda  R(\vec{\Phi}) \geq \mathcal{R}(\vec{\Phi}^{**}) + \lambda  R(\vec{\Phi}^{**}) \,, $$
for all $\vec{\Phi}$. This allows us to say that $\vec{\Phi}^{**}$ is also a solution of the penalty problem. 
As a result, we have proven that $\hat{\vec{\Phi}} = \vec{\Phi}^{**}$, so the two optimization problems are equivalent.  

The reformulation as a constrained optimization problem allows us to interpret the regularizer $R(\vec{\Phi})$ as an epistemic uncertainty budget that cannot be exceeded. As discussed in the main paper, when $t$ or $\lambda$ are chosen inadequately, the obtained solution will differ substantially from the reference distribution.

\section{Additional experiments}
\label{sec:regresults}
For the regression experiments we use the same setup as introduced in \cite{amini2020deep}, and critically analyzed by \citet{Meinert2022}. That is, we generate datasets $\{(x_i, x_i^3 + \epsilon)\}_{i=1}^N$ of different sample sizes, where the instances $x_i \in U([-4,4])$ are uniformly distributed and $\epsilon \sim \mathcal{N}(0, \sigma^2=9)$. Even though the setup is homoscedastic, we use negative log-likelihood loss for the first-order learner, while assuming an underlying normal distribution, making $\sigma$ as well as $\mu$ learnable parameters. This choice is made because the second-order risk minimization methods also assume heteroskedastic noise, so in this way the comparison is more fair.  

The second-order learner predicts the parameters $\boldsymbol{m}=(\gamma, \nu, \alpha, \beta)$ of a normal-inverse-Gamma distribution, trained with the loss functions obtained by outer and inner expectation minimization, respectively. Identically as in the experiment of Section \ref{sec:experiments}, we use two neural networks of the same architecture, with one fully-connected hidden layer of $32$ neurons. The first-order learner needs $2$ output neurons to predict $\boldsymbol{\theta}=(\mu, \sigma)$, while the second-order learners uses $4$ neurons for the parameters of the NIG distribution. The models are again trained for $5000$ epochs using the Adam optimizer and a learning rate of $0.0001$. Negative entropy of the NIG distribution is used as a regularization term in the loss function, where we set the weighting factor to $\lambda=0.01$.

Figure \ref{fig: regression conf bounds} displays the confidence bounds and mean prediction of the estimated reference distribution and the fitted second-order models.  The $95\%$ confidence bound for  $\widehat{\mu}$ and $\widehat{\sigma^2}$ are empirically computed for the reference model, and analytically for the second-order models, using the fact that $\mu \sim \mathcal{N}(\mu, \frac{\beta}{{(\alpha -1)\nu}})$ and $\sigma^2 \sim \Gamma^{-1}(\alpha, \beta)$ for the normal-inverse-Gamma distribution with parameters $\boldsymbol{m}=(\gamma, \nu, \alpha, \beta)$. We see that also in this setup, the mean prediction of the second-order learner remarkably well approximates the one of the reference model, however, the epistemic uncertainty is not well captured. For the outer expectation minimization, the convergence of the distribution of $\widehat{\mu}$ towards a Dirac delta distribution is clearly visible, while regularization does not improve the learning of the underlying epistemic uncertainty. The learned distribution of $\widehat{\sigma^2}$ is more difficult to interpret. Convergence to the Dirac delta distribution has not (yet) taken place. However, one has to keep in mind that all parameters are learned simultaneously by the neural network, and perhaps more is to be gained by focusing on $\mu$. Similarly as for the classification case, Figure \ref{fig: regression conf bounds} should be cautiously interpreted for the inner loss minimization method, because the figure only visualizes one run. 

Figure \ref{fig: parameter analysis nig} shows the learned parameters as a function of the number of epochs (up to 10000) for 40 runs with random initialization of the neural network.  For the unregularized case we see that the network learns to increase $\nu$ over time, thereby decreasing the variance of $\widehat{\mu}$.
For the inner expectation minimization,  we obtain similar results as in \cite{amini2020deep}. Note here that for comparability to the first experiment we use negative entropy instead of the proposed evidence based regularizor. For the unregularized case, the distribution of $\widehat{\sigma^2}$, whose expectation is an indicator of the aleatoric uncertainty, is more peaked in the center of region where the model sees data, which is in accordance to the results of \cite{Meinert2022}. Again the issue of non-identifiability of the learned parameters arises, and we obtained a high variability in the results for different runs.
\begin{figure}[h]
\includegraphics[width=\textwidth]{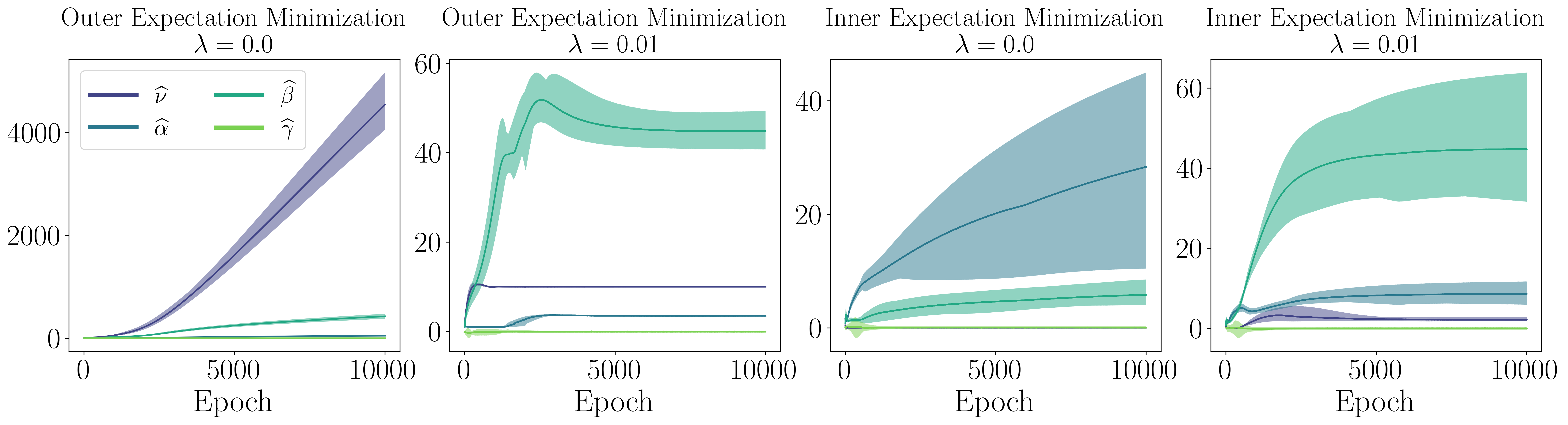}
\caption{Behavior of the average value of the predicted parameters $\widehat{\boldsymbol{m}}=(\widehat{\gamma}, \widehat{\nu}, \widehat{\alpha}, \widehat{\beta})$, obtained by the second-order learners as a function of the number of training epochs. To obtain the mean and the confidence bounds, $40$ models were trained on the same $N=1000$ training instances, differing only in their random weight initializations. The shaded areas are the $95\%$ empirical confidence bounds of the predictions, averaged over the instance space. In addition, the average over different runs is shown.}
\label{fig: parameter analysis nig}
\end{figure}

\end{document}